\documentclass{article}

\usepackage[main, preprint, nonatbib]{neurips_2026}

\usepackage[utf8]{inputenc} 
\usepackage[T1]{fontenc}    
\usepackage{hyperref}       
\usepackage{url}            
\usepackage{booktabs}       
\usepackage{amsfonts}       
\usepackage{nicefrac}       
\usepackage{microtype}      
\usepackage{xcolor}         
\usepackage[
     backend=biber,
     style=numeric-comp,
     backref=false,
     url=false,
     natbib=true]{biblatex}
\usepackage{mathtools} 
\usepackage{amsmath}
\usepackage{amssymb}
\usepackage{amsthm}
\usepackage{bm}        
\usepackage{xfrac}     
\usepackage[separate-uncertainty=true]{siunitx}
\usepackage{graphicx}
\usepackage{tikz}
\usetikzlibrary{arrows.meta,positioning,fit,calc,backgrounds,decorations.pathreplacing,shapes.geometric}
\usepackage{subcaption}
\usepackage{adjustbox}
\usepackage[most]{tcolorbox}
\usepackage{geometry}
\usepackage{multicol}
\usepackage[inline]{enumitem}
\usepackage{makecell}
\usepackage{multirow}
\usepackage{thmtools}
\usepackage{thm-restate}
\usepackage{mdframed}
\definecolor{summarygray}{RGB}{248,248,248}
\definecolor{summaryblue}{RGB}{55,105,170}
\newmdenv[
    backgroundcolor=summarygray,
    linecolor=summaryblue,
    linewidth=0pt,
    topline=false,
    rightline=false,
    bottomline=false,
    leftline=true,
    leftmargin=0pt,
    rightmargin=0pt,
    innerleftmargin=10pt,
    innerrightmargin=10pt,
    innertopmargin=7pt,
    innerbottommargin=7pt,
    skipabove=8pt,
    skipbelow=8pt,
    linewidth=2.2pt
]{summarybox}

\usepackage{csquotes} 
\usepackage{pifont}   
\usepackage{etoc}     
\usepackage{mdframed}
\usepackage{todonotes}
\usepackage{ifthen}

\usepackage{hyperref} 
\usepackage{cleveref} 

\usepackage{booktabs}
\usepackage[table]{xcolor}

\theoremstyle{definition}

\theoremstyle{plain}
\newtheorem{theorem}{Theorem}
\newtheorem{corollary}{Corollary}
\newtheorem{lemma}{Lemma}
\newtheorem{proposition}{Proposition}
\newtheorem{proof-sketch}{Proof Sketch}
\title{Remember to Forget: \\Gated Adaptive Positional Encoding}

\author{%
  Riccardo Ali$^{1}$\thanks{Equal contribution. Correspondence to:
  Riccardo Ali \texttt{<rma55@cam.ac.uk>},
  Alessio Borgi \texttt{<alessio.borgi@uniroma1.it, ab3352@cam.ac.uk>},
  Christopher Irwin \texttt{<cli23@cam.ac.uk>},
  Mario Severino \texttt{<mario.severino@phd.unipd.it>}.}
  \And
  Alessio Borgi$^{1,2,*}$ 
  \And
  Christopher Irwin$^{1,*}$ 
  \And
  Mario Severino$^{3,*}$ 
  \And
  Pietro Liò$^{1}$ 
  \\[1.5em]
  $^{1}$Department of Computer Science and Technology, University of Cambridge, United Kingdom \\
  $^{2}$Department of Computer, Control and Management Engineering, Sapienza University, Italy \\
  $^{3}$Department of Information Engineering, University of Padova, Italy
}

\begin{document}
\maketitle
\begin{abstract}
Rotary Positional Encoding (RoPE) is widely used in modern large language models. However, when sequences are extended beyond the range seen during training, rotary phases can enter out-of-distribution regimes, leading to spurious long-range alignments, diffuse attention, and degraded retrieval. Existing remedies only partially address these failures, as they often trade local positional resolution for long-context stability. We propose \textbf{GAPE} (\textbf{G}ated \textbf{A}daptive \textbf{P}ositional \textbf{E}ncoding), a drop-in augmentation for positional encodings that introduces a content-aware bias directly into the attention logits while preserving the rotary geometry. GAPE decouples distance-based suppression from token importance through a query-dependent gate that contracts irrelevant context and a key-dependent gate that preserves salient distant tokens. We prove that protected tokens remain accessible, while the attention mass assigned to unprotected distant tokens decays as a function of the query gate. We further show that GAPE can be implemented within standard scaled dot-product attention. We validate these properties empirically, finding that GAPE consistently yields sharper attention and improved long-context robustness over rotary baselines across both synthetic retrieval and long-context benchmarks.
\end{abstract}

\section{Introduction}
\label{sec:introduction}

Transformers \citep{vaswani2017attention} have become the \emph{de facto} 
backbone of modern machine learning, largely because self-attention provides a flexible and scalable mechanism for sequence modeling. The attention mechanism on its own does not encode token order: it is intrinsically permutation-equivariant, so the architecture requires an explicit \emph{positional encoding} (PE) to represent the sequence structure. One of the most widely adopted encodings, especially in Large Language Models (LLMs), is Rotary Positional Embedding (RoPE) \citep{su2024roformer}, used in popular models such as LLaMA~3 \citep{grattafiori2024llama3herdmodels} and Gemma~\citep{gemmateam2024gemmaopenmodelsbased}. RoPE encodes relative position via orthogonal rotations applied to queries and keys before the attention operation, thereby providing a geometric structure that effectively models local dependencies.

Despite widespread adoption of RoPE, its extrapolation behavior beyond the training horizon remains poorly understood and highly sensitive. Although earlier literature proposed RoPE variants grounded in the paradigm of scaling the base wavelength $\theta$ to extend a model's context window \cite{men2024base, peng2023yarn}, our work builds upon recent structural analyses demonstrating that this approach is fundamentally constrained by an \emph{interpolation-extrapolation} deadlock \cite{okaprobing, liu2023scaling, xu2024base}. 
As these studies establish, inflating $\theta$ delays out-of-distribution (OOD) phase explosion and preserves local interpolation, but causes lower-frequency channels to collapse toward identity maps. Conversely, shrinking $\theta$ smooths extrapolation curves by accelerating rotations, but undermines the long-range resolution necessary for reliable retrieval. Partial-RoPE variants such as $p$-RoPE \citep{barbero2024round} mitigate this trade-off by leaving a subset of channels unrotated; however, this mechanism is effectively equivalent to increasing the wavelengths and therefore remains exposed to the same underlying instabilities. We suggest that these limitations reflect a broader structural constraint: RoPE-like multiplicative encodings preserve precise local relative geometry, but lack a native mechanism to explicitly attenuate irrelevant long-range context.
Conversely, additive encodings such as ALiBi~\citep{press2021train} stabilize extended contexts through explicit distance penalties, but impose rigid suppression that can erase useful distant signals. Robust long-context modeling, therefore, requires both capabilities: high-resolution local structure and stable, selective attenuation of irrelevant distant information. 

To address this, we introduce \textbf{GAPE} (\textbf{G}ated \textbf{A}daptive \textbf{P}ositional \textbf{E}ncoding). Rather than forcing a base-scaling tradeoff or imposing a fixed distance decay, GAPE augments the native rotary geometry of RoPE with a data-dependent content-aware mask over the attention logits. Concretely, GAPE introduces two adaptive routing signals: a \emph{query gate} that allows each query token to modulate its recency bias, and a \emph{key landmark} that protects beneficial distant tokens from distance-based penalization. Together, these components define a positional mask that selectively suppresses irrelevant background context while preserving 
distant useful information, without modifying the underlying rotary geometry.

\textbf{Contributions.} 
We validate our framework both theoretically and empirically. Our core contributions can be summarized as follows:

\begin{itemize}[leftmargin=0.5cm]
    \item In Section~\ref{sec:GAPE}, we propose \textsc{GAPE}, a drop-in mask for positional encoding schemes. 
    We prove analytically that this mask can suppress irrelevant context while preserving semantically meaningful distant tokens in a data-driven way, producing sharper attention heads. Furthermore, we derive a FlashAttention-compatible implementation.

    \item In Section~\ref{sec:selective_routing}, we show that \textsc{GAPE} mitigates OOD phase hallucinations by contracting unprotected tokens while keeping protected distant tokens accessible. We validate this mechanism in a controlled Needle-in-a-Haystack setting.

    \item In Section~\ref{sec:experiments}, we empirically show that \textsc{GAPE} improves long-context robustness in language-modeling and long context tasks, yielding lower entropy, better retrieval, and comparable or stronger OOD extrapolation than RoPE and $p$-RoPE baselines.

\end{itemize}

\section{Background and Related Works}
\label{sec:related_work}
\noindent\textbf{Positional encodings in Transformers.}
Positional encodings inject sequence order into permutation-equivariant Transformers, either through absolute coordinates added to token embeddings 
\citep{vaswani2017attention, devlin2019bert} or relative-position injected directly into attention logits \citep{shaw2018self, dai2019transformer, raffel2020exploring}. Modern LLMs 
predominantly use RoPE \citep{su2024roformer}, 
which encodes relative position via orthogonal rotations applied to queries and keys. RoPE decomposes queries and keys into two-dimensional chunks, rotating each at a different frequency $g_k \in G$, ranging from $g_1 = 1$ radian per token (highest frequency) to $g_{d/2} \approx 1/\theta$ radians per token (lowest), where $\theta$ is 
the base wavelength, defaulting to $10{,}000$ \citep{su2024roformer}.

\noindent\textbf{Long-context extrapolation and the base-scaling deadlock.}
A natural response to RoPE's extrapolation failures is to scale $\theta$. Position Interpolation \citep{chen2023extending}, 
YaRN \citep{peng2023yarn}, and LongRoPE \citep{ding2024longrope} remap rotary frequencies to reduce OOD phase angles at extended lengths. However, recent theoretical analyzes reveal that this exposes an 
\emph{interpolation-extrapolation deadlock} \citep{okafrequency, liu2023scaling, xu2024base}: shrinking $\theta$ smooths extrapolation but harms long-range semantic discrimination, while inflating $\theta$ preserves 
local interpolation but devolves low-frequency channels into near-identity maps, ultimately colliding with floating-point precision limits \citep{liu2023scaling, xu2024base}. Concurrently, \citet{barbero2024round} and \citet{wertheimer2026frayed} demonstrate empirically and theoretically that low-frequency RoPE channels 
act as semantic carriers but are fragile at extended lengths, and that neither NoPE nor RoPE alone is sufficient: the two have complementary strengths that no single base-scaling strategy can reconcile.

\noindent\textbf{Partial-RoPE and frequency truncation.}
In light of these geometric constraints, partial-RoPE variants restrict rotations to a subset of channels. $p$-RoPE \citep{barbero2024round} removes the lowest-frequency rotations to preserve more stable semantic channels, and reports improved perplexity in 2B-parameter models. RoPE-ID \citep{wertheimer2026frayed} follows a related strategy by isolating the high-frequency components. However, recent probing reveals that the semantic bands where energy concentrates are not globally fixed; their locations vary significantly across heads, layers, and models, making any static frequency truncation threshold fundamentally erratic \cite{okafrequency, okaprobing}. Furthermore, while these approaches can improve stability, their effectiveness depends on how much of the rotary spectrum is retained: stronger truncation can weaken local positional structure, and even the remaining frequency-bearing channels can exhibit extrapolation instability if they do not undergo sufficient phase variation during training. At the same time, softmax dilution remains an independent issue, as unprotected tokens can still accumulate mass at long context lengths, often requiring additional heuristics such as temperature scaling. GAPE is designed to complement these variants by adding an explicit, data-dependent mask on top of the rotary backbone, rather than relying on frequency truncation alone.

\noindent\textbf{Additive biases and distance-based attenuation.} A complementary family of methods injects position-dependent biases directly into attention logits. ALiBi \citep{press2021train} applies a linear distance penalty, while KERPLE \citep{chi2022kerple} and FIRE \citep{li2023functional} learn richer scalar functions of relative displacement. These approaches improve extrapolation stability by explicitly bounding softmax denominator growth. However, because they remain purely distance-driven, they impose a rigid recency prior 
that makes long-range retrieval highly sensitive, often irreversibly erasing distant semantically relevant tokens. GAPE's landmark mechanism replaces this fixed decay with content-dependent protection, preserving distant tokens selectively rather than uniformly suppressing them.

\noindent\textbf{Content-aware and adaptive attention.}
Close in spirit to GAPE is FoX \citep{lin2025forgettingtransformersoftmaxattention}, which introduces a data-dependent forgetting mechanism to modulate contextual boundaries. FoX, however, is formulated as a replacement for rotary embeddings. Given the widespread use and strong empirical performance of RoPE, GAPE is instead designed as an augmentation: it preserves compatibility with existing RoPE configurations.

\begin{figure*}[t!]
\centering
\includegraphics[width=\linewidth]{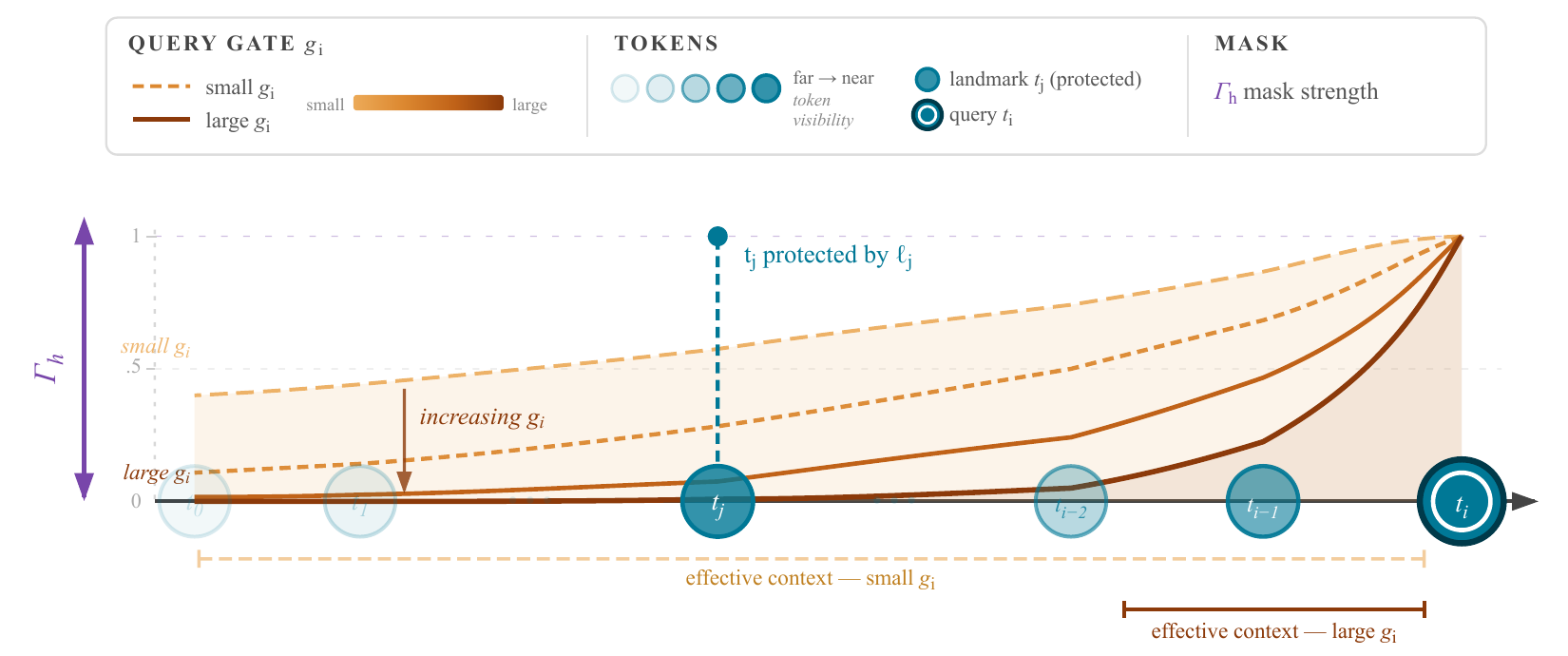}
\caption{
\emph{Gated Adaptive Positional Encoding.}
GAPE augments rotary attention with a mask that separates \emph{how strongly} the context is contracted from \emph{which} tokens are allowed to survive. For a query at position $t_i$, the query gate $g_i$ controls the forgetting rate and therefore the effective positional horizon: larger $g_i$ yields sharper suppression of unprotected distant tokens. The key landmark $l_j$ identifies tokens that should remain visible despite distance, lifting them toward the protected survival level. The head-wise amplitude $\Gamma_h$ calibrates the strength of this mechanism for each attention head. As a result, GAPE can suppress irrelevant long-range background while preserving selected tokens, avoiding both RoPE's uncontrolled long-range interference and the rigid monotone decay of fixed additive biases.
}
\label{fig:gated_adaptive_pe}
\end{figure*}

\section{Gated Adaptive Positional Encoding}
\label{sec:GAPE} 
Let $i$ be the current query position and $j \leq i$ be a causal key position, and let $R_{\Theta}(i-j)$ denote a generic RoPE-like positional map (instantiated with $p$-RoPE \citep{barbero2024round} in our 
implementation). The pre-softmax attention logit is:
\begin{equation}
    a_{i,j}
    =
    \underbrace{
    \frac{1}{\sqrt{d}}
    \mathbf{q}_i^\top R_{\Theta}(i-j)\mathbf{k}_j
    }_{s_{i,j}}
    +
    M_{i,j}
\end{equation}
where $s_{i,j}$ is the phase-modulated semantic similarity and $M_{i,j}$ is an additive mask, added after the multiplicative positional map so that the rotary geometry is left unchanged. We define:
\begin{equation}
    \widehat {M}_{i,j}
    =
    -\Gamma_h\, g_i \, (1-l_j)\frac{(i-j)}{T}
\end{equation}

\begin{itemize}[leftmargin=0.5cm]
    \item \emph{Landmark gate} ($l_j \in [0,1]$): Computed via a linear projection of the key, $l_j = \sigma(\mathbf{w}_l^\top \mathbf{k}_j )$. This marks structurally important keys for protection from distance-based suppression.
    \item \emph{Query gate} ($g_i \in \mathbb{R}^{+}$): Computed via a linear projection of the query, $g_i = \operatorname{Softplus}(\mathbf{w}_g^\top \mathbf{q}_i )$. This controls the contraction strength for the current query; larger values produce sharper suppression of unprotected context.
    \item \emph{Head amplitude} ($\Gamma_h \in \mathbb{R}^{+}$): A head-specific scale parameter defined as $\Gamma_h = \operatorname{Softplus}(\gamma_h)$, where $\gamma_h$ is a learned parameter that calibrates how strongly each attention head applies the mask.
    \item \emph{Context normalizer} ($T$): Set to the maximum training sequence length ($T_{\mathrm{train}}$) to anchor the scale of the positional distance bias $(i-j)$ to the training regime.
\end{itemize}

Although $\widehat{M}_{i,j}$ as defined is not directly compatible with FlashAttention, the following equivalent formulation is:
\begin{equation}
    M_{i,j}
    =
    \frac{\Gamma_h\, g_i}{T}
    \Bigl[j(1 - l_j) + i\, l_j\Bigr]
    \label{eq:gape}
\end{equation}
As we show below, adding $M_{i,j}$ to the attention logits yields identical post-softmax attention scores to $\widehat{M}_{i,j}$, while remaining compatible with FlashAttention. We therefore \textbf{adopt $M_{i,j}$ as the GAPE mask} throughout the paper. The equivalence of the two formulations, together with the FlashAttention compatibility of \textsc{GAPE}, is formalized as follow:\label{sec:hardware_factorization} 

\begin{proposition}
\label{prop:rank2_factorization}
The GAPE mask $M_{i,j}$ and the mask $\widehat{M}_{i,j}$ are equivalent post-softmax. Furthermore, $M_{i,j}$ can be computed within standard fused 
Scaled Dot-Product Attention (SDPA) without materializing an explicit $N \times N$ bias matrix. (Proof provided in Appendix~\ref{app:hardware_factorisation}.)
\end{proposition}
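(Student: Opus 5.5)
The plan is to verify the equivalence by showing that $M_{i,j}$ and $\widehat{M}_{i,j}$ differ only by a quantity that depends on the query index $i$ alone. Softmax over the key index $j$ is invariant to adding a row-constant shift. Expanding $\widehat{M}_{i,j}$ gives
\begin{equation*}
\widehat{M}_{i,j} = -\frac{\Gamma_h g_i}{T}(1-l_j)(i-j)
= -\frac{\Gamma_h g_i}{T}\bigl[i - i\,l_j - j + j\,l_j\bigr].
\end{equation*}
Rearranging, this equals
\begin{equation*}
\frac{\Gamma_h g_i}{T}\bigl[j(1-l_j) + i\,l_j\bigr] - \frac{\Gamma_h g_i}{T}\, i ,
\end{equation*}
that is, $\widehat{M}_{i,j} = M_{i,j} - c_i$ with $c_i = \Gamma_h g_i\, i/T$. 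Then, for each fixed $i$,
\begin{equation*}
\frac{\exp(s_{i,j}+\widehat{M}_{i,j})}{\sum_{j'\le i}\exp(s_{i,j'}+\widehat{M}_{i,j'})}
\end{equation*}
equals the same expression with $M$ in place of $\widehat{M}$, because the common factor $e^{-c_i}$ cancels between numerator and denominator. The causal mask restricts the sum to $j'\le i$, which does not affect this cancellation. This proves the first claim.

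For the FlashAttention claim, I will exhibit $M_{i,j}$ as an inner product of a per-query vector and a per-key vector, so that it can be absorbed into augmented queries and keys. Set
\begin{equation*}
u_i = \frac{\Gamma_h g_i}{T}\,(1,\; i)^\top, \qquad v_j = \bigl(j(1-l_j),\; l_j\bigr)^\top .
\end{equation*}
Then $u_i^\top v_j = \frac{\Gamma_h g_i}{T}[j(1-l_j) + i\,l_j] = M_{i,j}$, so the mask is rank two. I then define augmented vectors
\begin{equation*}
\tilde{\mathbf q}_i = \bigl(R_{\Theta}(i)^\top \mathbf q_i,\; \sqrt{d}\,u_i\bigr), \qquad \tilde{\mathbf k}_j = \bigl(R_{\Theta}(j)^\top \mathbf k_j,\; v_j\bigr),
\end{equation*}
using the RoPE property $R_\Theta(i)R_\Theta(j)^\top = R_\Theta(i-j)$, which holds since the rotations are orthogonal. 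With these, $\tilde{\mathbf q}_i^\top\tilde{\mathbf k}_j/\sqrt d = s_{i,j} + M_{i,j}$. Every term is computed from token $i$ or token $j$ alone. Standard fused causal SDPA with head dimension $d+2$ therefore computes the GAPE logits tiled and on the fly, never forming an $N\times N$ bias. The value vectors are untouched, so the outputs coincide.

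The main obstacle is making sure the reformulation is genuinely separable. The original $\widehat{M}$ contains the term $i\,l_j$, which mixes the indices; it is the expansion in the first step that isolates it as a product of a query-side factor $i$ and a key-side factor $l_j$. A further subtlety is that a kernel with the default scale $1/\sqrt{d+2}$ would distort the logits. I would handle this either by passing an explicit scale of $1/\sqrt d$, which SDPA interfaces allow, or by pre-scaling the augmented coordinates. I would also add a remark on numerical range. The shifted logits $M_{i,j}$ grow like $i/T$ rather than staying nonpositive. This is harmless because online softmax subtracts the running maximum, but I would note it for completeness.
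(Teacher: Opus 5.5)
Your proposal is correct and follows the paper's proof essentially step for step. It uses the same row-constant offset $\Gamma_h g_i\, i/T$ between $\widehat{M}_{i,j}$ and $M_{i,j}$, cancelled by softmax shift-invariance, and the same two-coordinate augmentation of queries and keys that realizes $M_{i,j}$ as a rank-two inner product inside SDPA. The only differences are bookkeeping: you append two coordinates, giving head dimension $d+2$ with an explicit $1/\sqrt{d}$ scale, whereas the paper reserves two of the $d$ coordinates so that the default SDPA scaling applies unchanged.
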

Concretely, we keep the head dimension fixed at $d$, allocating $d-2$ 
coordinates to the semantic component and reserving two for structural 
routing. The augmented query and key vectors $\tilde{\mathbf{q}}_i, 
\tilde{\mathbf{k}}_j \in \mathbb{R}^{d}$ are:
\begin{equation}
    \tilde{\mathbf{q}}_i =
    \begin{bmatrix}
    \mathbf{q}_i \\[2pt]
    \Gamma_h g_i \sqrt{d} \\
    \Gamma_h g_i \!\left(\tfrac{i}{T}\right)\!\sqrt{d}
    \end{bmatrix},
    \qquad
    \tilde{\mathbf{k}}_j =
    \begin{bmatrix}
    \mathbf{k}_j \\[2pt]
    \tfrac{j(1-l_j)}{T} \\
    l_j
    \end{bmatrix}
    \label{eq:augmented_qk}
\end{equation}
Under standard SDPA scaling, $\frac{1}{\sqrt{d}}\,\tilde{\mathbf{q}}_i^\top 
\tilde{\mathbf{k}}_j = s_{i,j} + M_{i,j} = a_{i,j}$, so the mask is realized exactly as a low-rank correction to the pre-softmax logits via the same matrix multiplication used for attention. GAPE therefore requires 
no custom kernel and is fully compatible with FlashAttention-style implementations \citep{dao2022flashattentionfastmemoryefficientexact, 
dao2023flashattention2fasterattentionbetter, shah2024flashattention, 
zadouri2026flashattention4algorithmkernelpipelining}. Furthermore, this factorization retains the translation invariance property of relative positional encodings (\textit{Proof in Appendix~\ref{app:translation_invariance}}).
Additionally, we provide complexity analysis in Appendix \ref{app:complexity}.

We now formally study the components $l_j$ and $g_i$ of GAPE and the mask's theoretical properties.

\noindent\textbf{The landmark $l_j$ protects important tokens from recency bias penalty.}
Recency bias-based positional encodings, such as ALiBi, impose a fixed spatial penalty, meaning that tokens that are distant enough from the query become effectively invisible and are removed from the context. However, this can be detrimental in certain situations, such as information retrieval when important tokens are far from the query. We show that GAPE's data-dependent landmark $l_j$ prevents this long-context erasure. In particular, when $l_j \to 1$, the key's distance penalty is removed. This is formalized as follows:

\begin{theorem}
\label{thm:bilateral_invariant}
Let $l_{j_{\mathrm{distant}}} \to 1$ for some 
$j_{\mathrm{distant}} \ll i$. Then $M_{i,i} = M_{i,j_{\mathrm{distant}}}$.
More generally, for any $a < b < i$ with $l_b < 1$, there exists a 
threshold $l_a^\ast \in (0,1)$ such that for all $l_a \in (l_a^\ast, 1]$: $M_{i,a} > M_{i,b}$. ({Proof provided in Appendix~\ref{app:selective_routing}.})
\end{theorem}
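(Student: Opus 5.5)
The plan is a direct computation from the FlashAttention-compatible form \eqref{eq:gape}. Write $c_i := \Gamma_h g_i / T$; this quantity depends only on the query. The key observation is that the bracket in $M_{i,j}$ is a convex combination of the key position $j$ and the query position $i$, with weight $l_j$ on $i$:
\begin{equation*}
M_{i,j} = c_i\bigl[\,j + l_j\,(i-j)\,\bigr].
\end{equation*}
We assume $c_i>0$ throughout. This holds because $\Gamma_h=\operatorname{Softplus}(\gamma_h)>0$ and $g_i=\operatorname{Softplus}(\cdot)>0$.

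\textbf{First claim.} For the diagonal entry, $M_{i,i}=c_i[i(1-l_i)+i\,l_i]=c_i\, i$, whatever the value of $l_i$. For the distant key, $M_{i,j_{\mathrm{distant}}}=c_i[j_{\mathrm{distant}}+l_{j_{\mathrm{distant}}}(i-j_{\mathrm{distant}})]$. This is affine and hence continuous in $l_{j_{\mathrm{distant}}}$, so it tends to $c_i\,i=M_{i,i}$ as $l_{j_{\mathrm{distant}}}\to 1$. The equality is to be read as the limiting (or $l=1$) statement. I would also note the interpretation: since $\widehat M$ differs from $M$ only by a query-dependent constant (Proposition~\ref{prop:rank2_factorization}), the protected key carries zero distance penalty, exactly like the key at the query's own position.

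\textbf{Second claim.} Fix $a<b<i$ with $l_b<1$. Compute the difference
\begin{equation*}
M_{i,a}-M_{i,b} = c_i\bigl[\,a + l_a(i-a) - b - l_b(i-b)\,\bigr].
\end{equation*}
Since $c_i>0$, it suffices to make the bracket positive. The bracket is strictly increasing and affine in $l_a$, because its slope is $i-a>0$. Solving for its root gives
\begin{equation*}
l_a^\ast = \frac{b-a+l_b(i-b)}{i-a}.
\end{equation*}
For every $l_a>l_a^\ast$ the bracket is positive, giving $M_{i,a}>M_{i,b}$.

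\textbf{Main obstacle.} The only real work is checking that $l_a^\ast\in(0,1)$. Positivity follows because $b-a>0$ and $l_b(i-b)\ge 0$. For $l_a^\ast<1$, I need $b-a+l_b(i-b)<i-a$, i.e. $l_b(i-b)<i-b$. This holds precisely because $l_b<1$ and $i-b>0$, which is exactly where the hypotheses $l_b<1$ and $b<i$ enter. With both bounds, the interval $(l_a^\ast,1]$ is nonempty, and at $l_a=1$ the strict inequality indeed holds, which completes the argument.
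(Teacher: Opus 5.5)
Your proposal is correct and follows essentially the same route as the paper: substitute directly into $M_{i,j}$, obtain $M_{i,i}=M_{i,j_{\mathrm{distant}}}=\frac{\Gamma_h g_i}{T}\,i$ at $l_{j_{\mathrm{distant}}}=1$, then solve the affine inequality $M_{i,a}>M_{i,b}$ in $l_a$ to get the same threshold $l_a^\ast=\frac{(i-b)l_b+(b-a)}{i-a}$. Your version is slightly more careful than the paper's, since you state explicitly that $\Gamma_h g_i/T>0$ (needed to cancel the prefactor) and you check both bounds $0<l_a^\ast<1$ instead of only asserting them.
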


Therefore, $l_j$ naturally partitions the context at step $i$ into a set of \textbf{protected} and \textbf{unprotected tokens}:
\begin{equation}
    P_i=\{j<i : l_j\to 1\} \cup\{i\}\;\text{ and } \;U_i=\{j<i : l_j\not\to 1 \}
\end{equation}

\noindent\textbf{The query gate $g_i$ modulates the recency bias.}
In addition to long-context retrieval, another shortcoming of fixed recency biases is that some queries may need a stronger recency priming, especially in the presence of confounders. We show that the query gate $g_i$ controls the strength of the recency bias, allowing a query $i$ to choose its effective context length $\Delta_\text{min}$ while preserving distant tokens protected by the landmark $l_j$. Specifically, a high $g_i$ corresponds to a narrower context length. This is formalized as follows:

\begin{theorem}
\label{thm:asymptotic_eradication}
Let $P_i$ and $U_i$ be the protected and unprotected partitions at step $i$ and let 
$S_{\max} := \max_m \|\mathbf q_i\|\,\|\mathbf k_m\|$. Then, the total attention mass on $U_i$
satisfies:
\begin{equation}
    \sum_{k \in U_i} \alpha_{i,k}
\;\le\;
e^{2S_{\max}}
\sum_{k \in U_i}
\exp\!\left(-\frac{\Gamma_h g_i}{T}(i - k)\right)
\end{equation}
If every unprotected token satisfies $i - k \ge \Delta_{\min}$, this 
simplifies to:
\begin{equation}
    \sum_{k \in U_i^\ast} \alpha_{i,k}
\;\le\;
|U_i|\,\exp\!\left(2S_{\max}
- \frac{\Gamma_h g_i}{T}\Delta_{\min}\right)
\end{equation}
and the unprotected $(l_j<1)$ total attention weight vanishes exponentially as $g_i \to \infty$.
(\textit{Proof in Appendix~\ref{app:recency_bias}.})
\end{theorem}

This theorem allows us to define an \textbf{effective context length}, denoted $\Delta_i^{\text{eff},p}$, which represents the distance from the query $i$ beyond which unprotected tokens become effectively invisible. Formally, for any tolerance threshold $p \in (0,1)$, we define:
\begin{equation}
    \Delta_i^{\text{eff},p} := \frac{T}{\Gamma_h g_i} \left( 2S_{\max} + \log \frac{|U_i|}{p}\right)
\end{equation}
This establishes that the collective attention mass of unprotected tokens located beyond $\Delta_i^{\text{eff},p}$ is bounded by $p$. Consequently, these distant tokens are highly constrained in their contribution to the query's output representation, capped at $p$ collectively and $p/|U_i|$ individually. \newline
Hence, we may consider the set of \textbf{visible tokens at $i$ up to $p$} to be the tokens $j$ such that: $j\in P_i \cup \{k : i-k<\Delta_i^{\text{eff},p}\}$

Together, Theorems~\ref{thm:bilateral_invariant} and~\ref{thm:asymptotic_eradication} establish the two complementary guarantees of GAPE: protected tokens are invariant to distance-based penalties, and unprotected attention weights are attenuated without distorting the semantic structure of what remains in a data-driven way. Jointly, these guarantees also prevent dilution of attention mass: unprotected tokens contribute only an exponentially decaying residual to the normalization term, ensuring that background accumulation cannot overwhelm relevant contextual signals (Proof in Appendix~\ref{app:partition_growth}).

\noindent\textbf{Entropy Stability.}
When the context becomes longer, many more tokens compete inside the softmax. If irrelevant tokens are not suppressed, attention can spread over the context instead
of focusing on the few tokens that matter, making retrieval less \textit{sharp} and hurting length generalization~\citep{softmaxpetar}. This is especially problematic for RoPE-based models beyond the training length, where unseen rotary phases can lead
to unstable attention scores and higher attention entropy~\citep{chen2023extending,liu2023scaling}. GAPE resolves this through the interplay between landmark preservation and query-dependent contraction. In
Appendix~\ref{app:entropy_contraction}, Corollary~\ref{cor:entropy_collapse}, we
show that as $g_i$ increases, the attention mass on unprotected tokens vanishes. As a result, attention entropy is determined mainly by the protected set, rather
than by the full context. This stabilizes attention without removing important landmark information. 
\newline We validate this behavior empirically in Section~\ref{sec:selective_routing} and Section~\ref{sec:GAPE_mechanistic_analysis}, where we find that GAPE consistently produces lower-entropy attention maps than RoPE and $p$-RoPE, and that layers with larger learned values of $g_i$ exhibit sharper, more selective attention patterns.

\begin{summarybox}
\textbf{Key Takeaways:} \textit{GAPE separates selection from suppression: $l_j$ preserves relevant tokens, $g_i$ contracts the rest, concentrating attention on the protected set of tokens and limiting entropy growth. Furthermore, GAPE is compatible with FlashAttention.}
\end{summarybox}

\section{Can GAPE Improve Phase Hallucinations?}
\label{sec:selective_routing}
Distance-driven positional biases such as ALiBi apply a fixed penalty that grows with distance, thereby treating recency as a proxy for relevance. This can be limiting in long-context retrieval, where distant evidence may be essential, and in settings with confounders, where the required degree of recency bias may vary across queries. As discussed in Section~\ref{sec:GAPE}, GAPE avoids this coupling by separating distance-based contraction from content-based landmark preservation.

In this section, we aim to experimentally validate GAPE's effectiveness in a controlled needle-in-a-haystack (NIAH) setup and provide a mechanistic understanding of its components.

\begin{figure}[ht]
    \centering
    \includegraphics[width=0.9\linewidth]{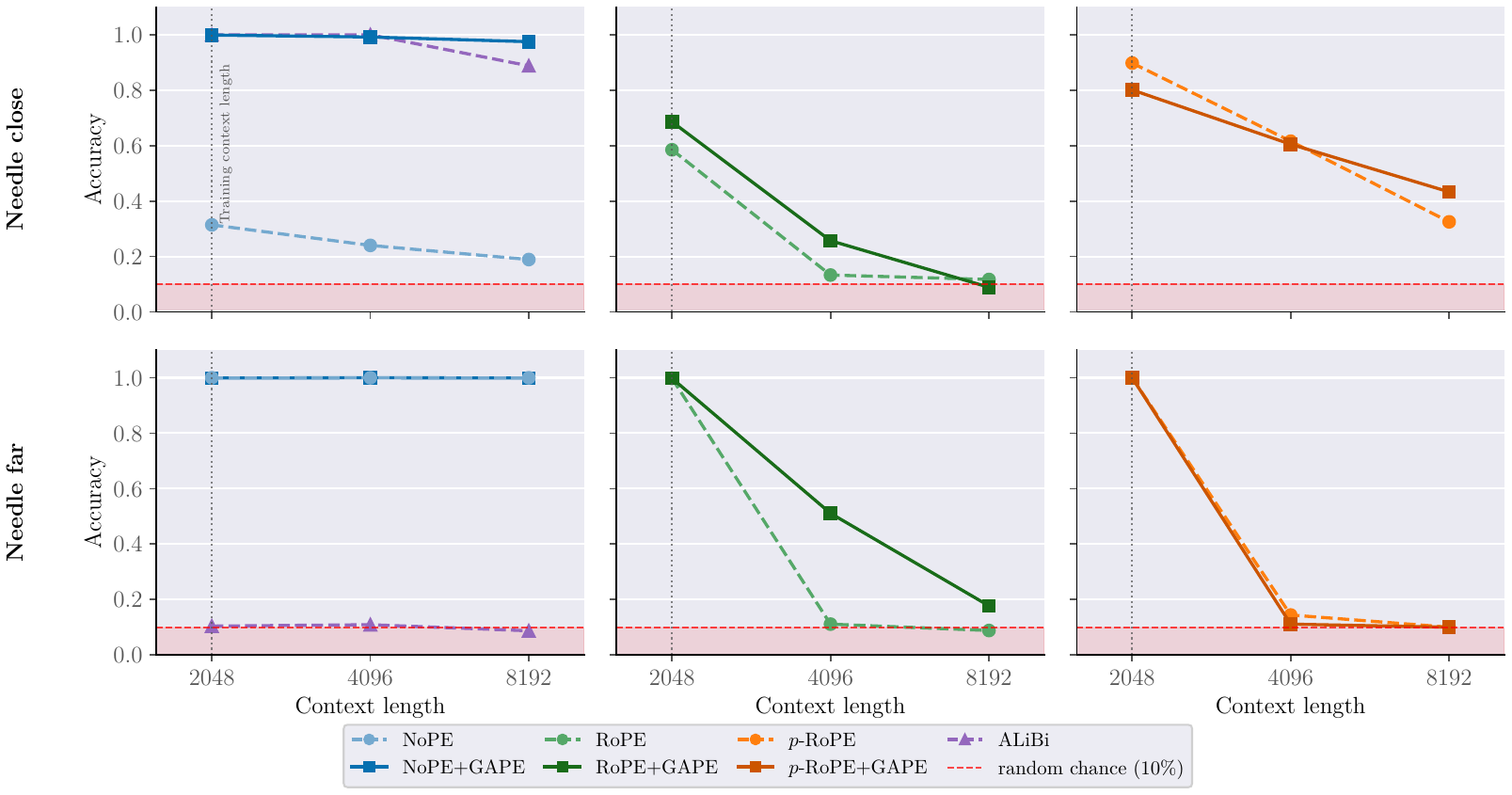}
    \caption{
    \emph{NIAH retrieval under context extrapolation.}
    Models are trained at $2048$ tokens and evaluated at $1{\times}$, $2{\times}$, and $4{\times}$ context lengths, with the target needle placed close to the query (top row) or far from it (bottom row). Columns compare NoPE \textbf{(Left)}, RoPE \textbf{(Middle)}, and $p$-RoPE \textbf{(Right)} against their GAPE-augmented variants, with ALiBi included as a fixed recency-bias baseline. GAPE improves length extrapolation across settings by preserving high accuracy for nearby needles and keeping distant relevant tokens accessible, while baseline encodings degrade or collapse toward random chance.
    }
    \label{fig:GAPE_accuracy_analysis}
\end{figure}

\textbf{Phase Hallucinations} \label{sec:bounding_hallucinations}
Building on the discussion in Section~\ref{sec:related_work}, we now formalize the OOD failure mode that arises when rotary encodings are extrapolated beyond their training context. The issue is both spectral and geometric: RoPE frequency bands are calibrated to the training horizon, so high-frequency channels are highly sensitive to small token rearrangements \cite{okafrequency, okaprobing, xu2024base}, while the lowest-frequency channels that act as semantic carriers in-distribution become fragile under extrapolation \cite{barbero2024round, wertheimer2026frayed}. As a result, relative phases at long distances can drift into configurations never seen during training, causing unrelated query--key pairs to align spuriously and producing artificially large semantic logits. We refer to these events as \emph{phase hallucinations}. 

We show that GAPE offers a mechanism to mitigate phase hallucination by adjusting the effective context length, without sacrificing distant protected tokens. This is formalized as follows:

\begin{theorem}
\label{thm:dynamic_and_asymptotic}
Suppose $g_i\neq 0$. If an unprotected token $(l_j<1)$ produces a phase hallucination at a relative distance 
$\Delta = i-j < \frac{2S_{\max}T}{\Gamma_h g_i}$, the query can eliminate it by choosing $g_i^\ast > \frac{2S_{\max}T}{\Gamma_h \Delta}$ .

\textit{(An extended statement and proof are provided in Appendix~\ref{app:bounding_hallucinations}).}
\end{theorem}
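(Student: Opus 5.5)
We work directly with the logit-level quantities and reduce everything to comparing the hallucinating unprotected key $j$ against the query's own position $i$, which is protected by definition ($i\in P_i$). By Proposition~\ref{prop:rank2_factorization} we may use $\widehat{M}_{i,j}$ in place of $M_{i,j}$, since the two differ only by a query-dependent additive constant that cancels in the softmax. We formalize ``eliminating'' a phase hallucination as follows: after GAPE, the total logit of $j$ falls strictly below that of the self-token $i$, even if the semantic term $s_{i,j}$ is hallucinated up to its maximal value.

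\textbf{Step 1: bound the semantic gap.} By Cauchy--Schwarz and orthogonality of $R_\Theta$, $|s_{i,m}|\le \|\mathbf q_i\|\|\mathbf k_m\|/\sqrt d\le S_{\max}$ for every $m$. Hence the worst possible hallucination gives
\[
s_{i,j}-s_{i,i}\le 2S_{\max}.
\]
This is the same $e^{2S_{\max}}$ factor that appears in Theorem~\ref{thm:asymptotic_eradication}.

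\textbf{Step 2: compute the mask gap.} We have $\widehat M_{i,i}=0$ and
\[
\widehat M_{i,j}=-\Gamma_h g_i(1-l_j)\Delta/T.
\]
Therefore
\[
a_{i,j}-a_{i,i}\le 2S_{\max}-\Gamma_h g_i(1-l_j)\Delta/T.
\]
The hypothesis $\Delta<2S_{\max}T/(\Gamma_h g_i)$ says precisely that, at the current gate, even with $l_j=0$ the penalty $\Gamma_h g_i\Delta/T$ is smaller than $2S_{\max}$. So the hallucination can survive, and can even dominate the self-token. This explains why the hypothesis is stated. Choosing $g_i^\ast>2S_{\max}T/(\Gamma_h\Delta)$ makes $\Gamma_h g_i^\ast\Delta/T>2S_{\max}$, so $a_{i,j}<a_{i,i}$, and hence $\alpha_{i,j}<\alpha_{i,i}$.

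\textbf{Step 3: interpret quantitatively.} We then invoke Theorem~\ref{thm:asymptotic_eradication}, applied to the single token $j$ with $\Delta_{\min}=\Delta$, to get
\[
\alpha_{i,j}\le \exp\bigl(2S_{\max}-\Gamma_h g_i^\ast\Delta/T\bigr)<1.
\]
This bound decays exponentially as $g_i^\ast$ grows beyond the threshold. Equivalently, the threshold condition says $\Delta$ exceeds the effective horizon $\Delta_i^{\mathrm{eff},p}$ at the corresponding tolerance. Protected tokens are unaffected, because $\widehat M$ vanishes on them, as in Theorem~\ref{thm:bilateral_invariant}. So the contraction does not sacrifice landmarks.

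\textbf{Main obstacle: the factor $(1-l_j)$.} Taken literally, the threshold in the statement only works when $l_j=0$. For $l_j\in(0,1)$, Step 2 actually requires
\[
g_i^\ast>\frac{2S_{\max}T}{\Gamma_h(1-l_j)\Delta}.
\]
I would handle this by proving the statement for the penalty-effective case $l_j=0$, as stated. I would then note in the extended version that the general threshold is rescaled by $(1-l_j)^{-1}$, which is finite precisely because $j$ is unprotected ($l_j<1$). A secondary subtlety is that $g_i$ is produced by a Softplus of $\mathbf w_g^\top\mathbf q_i$, so ``choosing'' $g_i^\ast$ means the range of Softplus is unbounded above, which guarantees such a value is attainable.
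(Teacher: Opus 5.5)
Your proof is correct and follows the paper's argument: the paper also compares the logit of the hallucinating key $j$ with the self-token $i$, bounds the semantic gap by $2S_{\max}$, and uses the mask gap $\Gamma_h g_i\Delta/T$. It works with $M$ directly, which gives the same within-row logit differences as your $\widehat M$. Your flag on the factor $(1-l_j)$ is also right: the paper's proof silently sets $l_j=0$ for unprotected tokens, so for $l_j\in(0,1)$ the threshold does have to be rescaled to $2S_{\max}T/\bigl(\Gamma_h(1-l_j)\Delta\bigr)$.
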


These theoretical guarantees make a concrete empirical prediction: GAPE combined with rotary positional encodings should be more robust under OOD length extension than rotary encodings alone, because the structural penalty prevents phase hallucinations from accumulating in the attention distribution. We verify this prediction in Section~\ref{sec:experiments}, where $p$-RoPE+GAPE consistently improves over the baselines on length-OOD tasks.

\begin{figure}[t]
    \centering
    \includegraphics[width=0.75\linewidth]{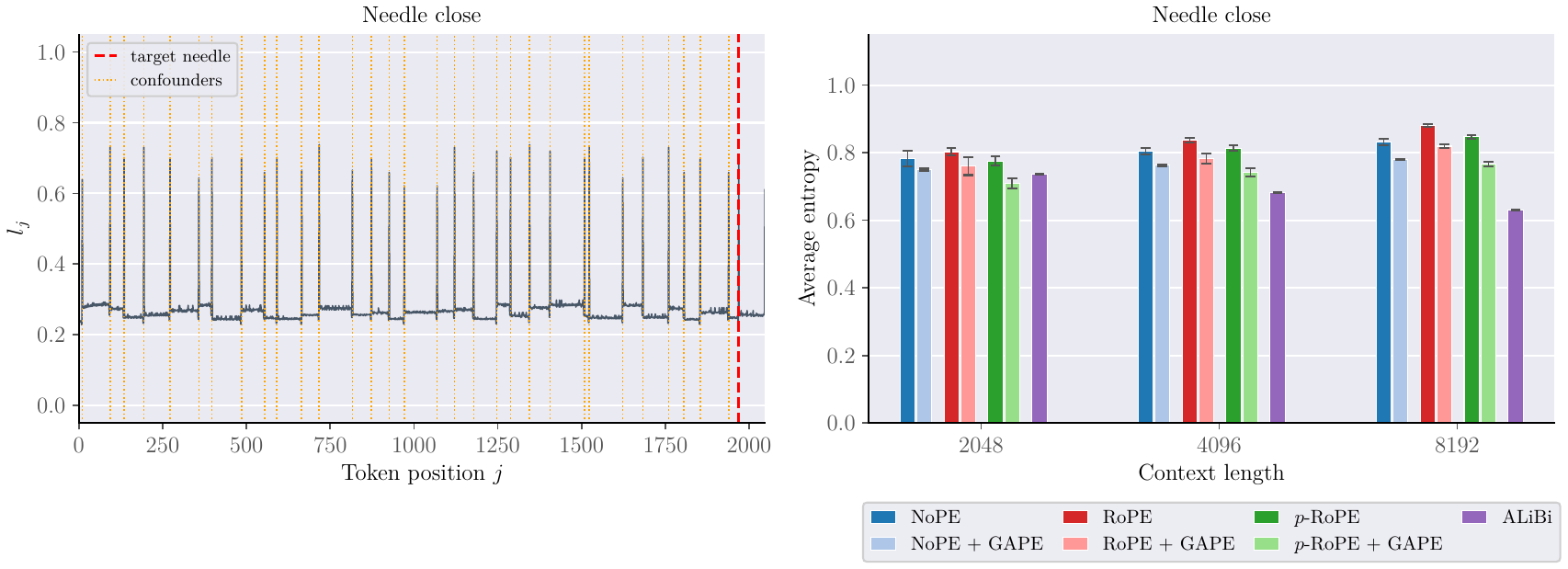}
    \caption{
    \emph{Mechanistic behavior of the GAPE gates in the NIAH task.}
    \textbf{Left:} \emph{Landmark gate $l_j$ at layer 2, head 1.}
    The gate produces sparse peaks at the needle positions, marking them as protected and accessible despite surrounding filler and distractor tokens.
    \textbf{Right:} \emph{Average attention entropy over layers and heads in the Needle-Far setting.}
    Lower entropy indicates sharper and more concentrated attention. GAPE generally reduces entropy relative to the corresponding base positional encoding, especially for NoPE and $p$-RoPE, suggesting that the learned mask contracts irrelevant background context while preserving relevant tokens. Means and standard deviations over 10 samples.
    }
    \label{fig:GAPE_gate_analysis}
\end{figure}

\noindent\textbf{NIAH retrieval.}
We now explore the theoretical properties of the GAPE mask components in a controlled NIAH setup, where a single relevant token must be recovered from a large set of distractors at varying depths. Specifically, we choose a vocabulary composed of 14 filler tokens, a key token `KEY', the equal sign `=', 10 digit tokens, and a query token `?'. A \textit{needle} is a 3-token triple (`KEY', =, $d$), where $d$ is a digit token. An input sequence is a string of the form \textit{``[filler...]KEY=$d_1$[filler...]...KEY=$d_n$?''} where needles are placed uniformly at random in the sequence. We feed sequences of length $L=2048$, with $n=\text{floor}(L/64)=32$ needles, to 2-layer transformers equipped with different positional encoding schemes, and train them to retrieve either the first or the last needle. Then, we evaluate each model on sequences of length $L\in\{2048, 4096, 8192\}$, selecting $n$ accordingly. Further details can be found in Appendix \ref{sec:niah-settings}. \newline
This setting directly probes the interaction between distance and relevance, allowing us to examine the learned gates $g_i$ and $l_j$. Furthermore, it exposes the failure mode of \textit{(i)} distance-based biases such as ALiBi, where relevant tokens become inaccessible as context length grows; and \textit{(ii)} RoPE-like positional encodings where phase hallucinations harm length generalization.

\textbf{Results.} As shown in Figure~\ref{fig:GAPE_accuracy_analysis}, NoPE+GAPE remarkably shows near-perfect accuracy across all needle positions and context lengths. As expected, \textit{(i)} ALiBi perfectly solves the task when the needle is close to the query due to its fixed recency bias, but completely fails it when the needle position is far; \textit{(ii)} RoPE-like positional encodings' accuracy degrades for OOD context lengths, but it is improved when used in conjunction with GAPE. Appendix \ref{app:niah_fox_comparison} reports further comparisons with FoX~\cite{lin2025forgettingtransformersoftmaxattention}, showing improved retrieval, and with YaRN~\cite{peng2023yarn}, suggesting a strong synergy between GAPE and rotary interpolation methods.

\textbf{Mechanistic considerations.} Figure~\ref{fig:GAPE_gate_analysis} shows the mechanism by which GAPE solves the task. We can see that the landmark $l_j$ fires exactly at the needle positions, protecting these tokens from long context erasure. Furthermore, Appendix~\ref{app:query-gate_behaviour} illustrates the action of the query gate $g_i$: when the target is close to the query, $g_i$ has a higher value, intensifying the recency bias, whereas when the target is far from the query, the gate $g_i$ eliminates the recency bias. We also report the attention entropy per layer in the same NIAH setting in Figure~\ref{fig:GAPE_gate_analysis}~(Right), demonstrating that GAPE's attention is sharper. 
We provide additional insights into the theoretical properties of GAPE in the context of the NIAH setting in Appendix~\ref{app:niah_retrieval}.

\begin{summarybox}
\textbf{Key Takeaways:} \textit{The NIAH experiments show that GAPE effectively ameliorates phase hallucinations for length generalization and offer a mechanistic explanation of its mechanism.}
\end{summarybox}

\section{Experiments on Long-Context Language Modeling}
\label{sec:experiments}
In this section, we analyze the properties of GAPE when integrated into models trained on natural language pre-training corpora. First, we compare our combined configuration, $p$-RoPE+GAPE (hereafter referred to as GAPE for brevity), against standard rotary baselines (RoPE and $p$-RoPE). We demonstrate its superior length extrapolation and performance gains on the RULER long-context benchmark \cite{hsieh2024ruler}. Second, we examine how GAPE reshapes attention mechanisms, specifically by lowering average entropy and suppressing diffuse background context. Finally, we show that the head specialization observed in other works is further amplified by GAPE and driven by the mask parameter $g_i$.

\noindent\textbf{OOD Perplexity Experiment.}
\label{sec:ood_ppl_context_extension}
We evaluate whether GAPE improves the robustness of language modeling when the inference context exceeds the training context length. We train models at a fixed context length and evaluate perplexity on progressively longer contexts, comparing GAPE against RoPE and $p$-RoPE. As shown in Figure~\ref{fig:ood_ppl_by_context_length}, all methods remain stable around the training context length, but their behavior diverges as the evaluation context grows. In particular, RoPE and $p$-RoPE exhibit a sharper perplexity increase at long contexts. GAPE instead yields lower perplexity in the OOD regime, especially at the largest evaluated context lengths.

\begin{figure}[h]
    \centering
    \includegraphics[width=0.9\linewidth]{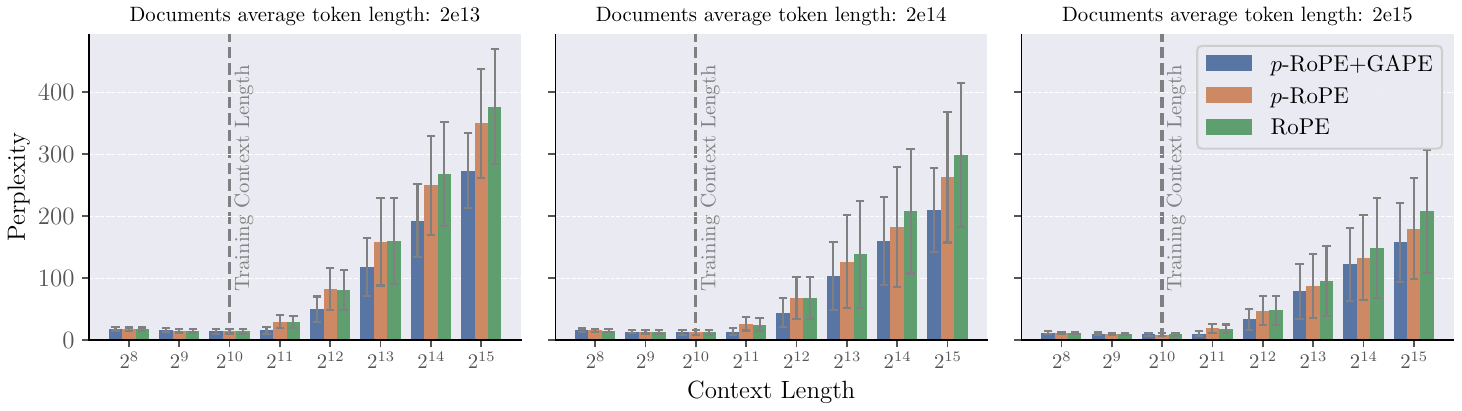}
    \caption{
    \emph{OOD perplexity under context extension} (mean $\pm$ std). We train models at a fixed context length and evaluate them on progressively longer sequences. Across all document-length regimes, perplexity increases once the evaluation context exceeds the training horizon, but GAPE grows more slowly than RoPE and $p$-RoPE, indicating that the learned structural gate improves length extrapolation.
    }
    \label{fig:ood_ppl_by_context_length}
\end{figure}

\noindent\textbf{Long Context Retrieval Experiment.}
\label{sec:long_context_benchmark}
We evaluate GAPE on the RULER long-context benchmark, assessing a model's capacity for information retrieval and task execution across extended sequences. Table \ref{tab:results_wide} presents the average Negative Log-Likelihood (NLL) per token for models pretrained on the FineWeb dataset and a fine-tuning of the GPT2 model. GAPE yields lower NLL than the baseline positional encoding schemes across all subtasks. Notably, the performance gap widens in the 8k subsets, suggesting that GAPE offers superior stability as the context window expands.

\begin{table}[h]
\centering
\caption{\textit{Performance on different subsets of RULER benchmark dataset}. The reported values are average NLL per token (mean $\pm$ std). Lower is better. We report the best per dataset subset in \textbf{bold} and the second best underlined (within each model).}
\label{tab:results_wide}
\resizebox{1.0\textwidth}{!}{
\begin{tabular}{l l cccccccc}
\toprule
\textbf{Model} & \textbf{Method} 
& \textbf{cwe\_4k} 
& \textbf{cwe\_8k} 
& \textbf{niah\_m-key\_1\_4k} 
& \textbf{niah\_m-key\_1\_8k} 
& \textbf{qa\_2\_4k} 
& \textbf{qa\_2\_8k} 
& \textbf{vt\_4k} 
& \textbf{vt\_8k} \\
\midrule

\multirow{3}{*}{nanoGPT-44M} 
& RoPE       
& 10.21 $\pm$ 0.95
& 11.52 $\pm$ 1.18
& 12.19 $\pm$ 0.85
& 12.59 $\pm$ 0.89
& \underline{8.45 $\pm$ 2.05}
& \underline{8.95 $\pm$ 2.00}
& 10.43 $\pm$ 1.41
& 11.11 $\pm$ 1.52 \\
& p-RoPE      
& \underline{9.97 $\pm$ 1.12}
& \underline{10.80 $\pm$ 1.19}
& \underline{11.98 $\pm$ 0.90}
& \underline{12.06 $\pm$ 0.88}
& 8.63 $\pm$ 2.26
& 8.99 $\pm$ 2.04
& \underline{9.87 $\pm$ 1.49}
& \underline{10.68 $\pm$ 1.61} \\
& \textbf{GAPE (ours)} 
& \textbf{9.32 $\pm$ 0.88}
& \textbf{9.48 $\pm$ 0.83}
& \textbf{10.59 $\pm$ 1.10}
& \textbf{11.36 $\pm$ 0.82}
& \textbf{8.14 $\pm$ 1.95}
& \textbf{8.80 $\pm$ 1.63}
& \textbf{9.18 $\pm$ 1.41}
& \textbf{10.40 $\pm$ 1.44} \\

\midrule

\multirow{3}{*}{GPT2-124M} 
& RoPE       
& 8.21 $\pm$ 1.08
& 9.13 $\pm$ 1.01
& 9.51 $\pm$ 1.30
& 10.57 $\pm$ 1.19
& \underline{6.45 $\pm$ 2.64}
& 7.54 $\pm$ 2.30
& 9.90 $\pm$ 1.99
& 10.70 $\pm$ 1.98 \\
& p-RoPE      
& \underline{8.10 $\pm$ 1.12}
& \textbf{8.82 $\pm$ 1.14}
& \underline{9.26 $\pm$ 1.18}
& \underline{10.05 $\pm$ 0.93}
& \textbf{6.32 $\pm$ 2.68}
& \underline{6.88 $\pm$ 2.52}
& \underline{8.50 $\pm$ 1.89}
& \underline{9.53 $\pm$ 1.98} \\
& \textbf{GAPE (ours)} 
& \textbf{7.96 $\pm$ 1.10}
& \underline{8.83 $\pm$ 1.14}
& \textbf{8.94 $\pm$ 1.12}
& \textbf{9.87 $\pm$ 1.00}
& \underline{6.39 $\pm$ 2.70}
& \textbf{6.87 $\pm$ 2.64}
& \textbf{8.31 $\pm$ 1.78}
& \textbf{8.98 $\pm$ 1.87} \\

\bottomrule
\end{tabular}
}
\end{table}
\noindent\textbf{Attention Maps Entropy Analysis.}
\label{sec:attention_entropy}
Having verified GAPE's gating dynamics in a controlled synthetic setting in Section~\ref{sec:selective_routing}, we now examine whether the same behavior emerges on real long-context inputs. Following \citet{liu2023scaling}, we use attention entropy as a proxy for attention sharpness, where lower entropy corresponds to more concentrated attention. We compute the average attention entropy per Transformer block on samples from the RULER benchmark. 
As shown in Figure~\ref{fig:GAPE_final_layer_analysis} (Left), GAPE produces lower-entropy attention maps than RoPE and $p$-RoPE across layers, suggesting that the learned mask suppresses diffuse background context. Moreover, Figure~\ref{fig:GAPE_final_layer_analysis}-(Middle) shows an inverse relationship between the learned gate $g_i$ and attention entropy: layers with stronger gates produce sharper attention patterns. These results confirm, on real-world long-context inputs, the mechanism observed in the synthetic analysis, discussed in Section~\ref{sec:selective_routing} and  Appendix~\ref{app:entropy_contraction}, where increasing $g_i$ contracts unprotected attention mass while preserving access to relevant landmarks.

\noindent\textbf{Mechanistic Analysis of GAPE.}
\label{sec:GAPE_mechanistic_analysis}
We probe whether GAPE actively uses its routing variables by tracking the mean mask $\bar{M}_h$, landmark activation $\bar{l}_h$, query gate $\bar{g}_h$, and amplitude $\Gamma_h$ on a fixed reference batch. Full layer-wise trajectories are provided in Appendix~\ref{app:GAPE_routing_dynamics}; here we show the final-layer mask, which directly reflects the bias injected into the pre-softmax logits.
As shown in Figure~\ref{fig:GAPE_final_layer_analysis}-right, GAPE learns non-uniform head-wise masks: several heads develop strong biases, while others remain weakly biased. Since $M_{i,j}$ is added directly to the attention logits, this confirms that the mask actively reshapes attention. The full dynamics further show sparse, layer-dependent landmark activations and bounded amplitudes, suggesting that contraction arises from learned query--key interaction rather than the growth of $\Gamma_h$. Overall, GAPE separates background suppression from landmark-mediated preservation. We further analyze the distribution of frequency-channel norms in Appendix~\ref{app:qk_norm_analysis}, showing how GAPE affects the use of rotary frequency bands under long-context evaluation. While in \ref{app:ablation} we ablate the components of the mask.

\begin{figure}[ht]
    \centering
    \begin{subfigure}[h]{0.3\linewidth}
        \centering
        \includegraphics[width=\linewidth]{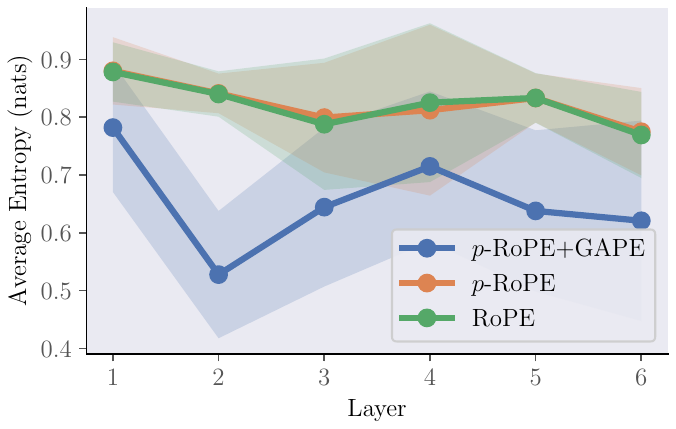}
        \label{fig:attention_entropy_by_layer_baselines}
    \end{subfigure}
    \hfill
    \begin{subfigure}[h]{0.3\linewidth}
        \centering
        \includegraphics[width=\linewidth]{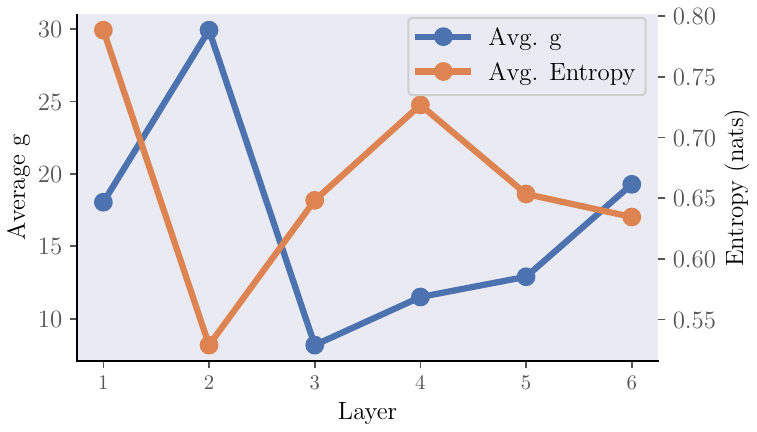}
        \label{fig:attention_entropy_by_layer}
    \end{subfigure}
    \hfill
    \begin{subfigure}[h]{0.36\linewidth}
        \centering
        \includegraphics[width=\linewidth]{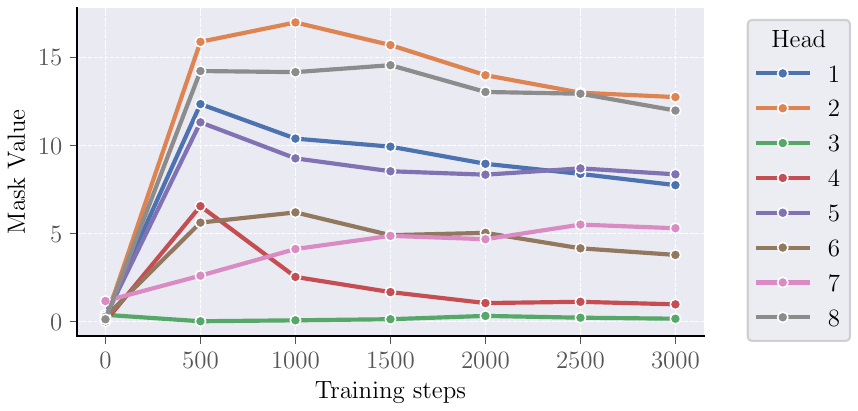}
        \label{fig:GAPE_layer5_mask}
    \end{subfigure}
    \caption{
    \emph{Attention sharpness and head behavior under GAPE.} \textbf{Left:} Average attention entropy across layers (mean $\pm$ std). Lower entropy indicates sharper and more concentrated attention. GAPE consistently produces lower entropy than RoPE and $p$-RoPE. \textbf{Middle:} Relationship between the learned query gate $g$ and attention entropy across
    layers. As $g$ increases, entropy decreases, showing that stronger gating is associated with more selective attention. \textbf{Right:} Head-wise mask values learned by GAPE. Different heads develop different mask strengths, indicating a specialization of routing roles, where some heads apply stronger structural suppression than others. }
    \label{fig:GAPE_final_layer_analysis}
\end{figure}

\noindent\textbf{Setup Details.}
\label{sec:real_world_setup}
We evaluate three model configurations across two distinct pre-training regimes to isolate length extrapolation from downstream task performance:
\vspace{-0.3cm}
\begin{itemize}[leftmargin=0.5cm]
\item \textbf{OOD Perplexity Setup.} To analyze OOD perplexity, we train a nanoGPT model (44M) for each positional encoding on a subset of Dolma 3 Longmino Mix \cite{olmo2025olmo3}. We utilize three dataset subsets characterized by average document lengths of $2^{13}$, $2^{14}$, and $2^{15}$ tokens. Training is restricted to the $2^{13}$ subset, with sequences truncated to a 1024-token window to provide a controlled baseline and align natural document sizes with the training context. The longer subsets ($2^{14}$ and $2^{15}$) are reserved for evaluation. This stratification ensures that during OOD evaluation, the models process genuinely contiguous long texts rather than artificially concatenated sequences. Reported values are calculated on \num{8000} samples.
\vspace{-0.15cm}
\item \textbf{Long Context Retrieval Setup.} 
For the 44M models, we train from scratch a nanoGPT for each positional encoding with the same hyperparameters as above on a subset of FineWeb \citep{fineweb} to develop language modeling capabilities. These models are then benchmarked against RULER as described. For the 124M models, we use a pre-trained GPT-2 and replace its original absolute positional embeddings with standard rotary schemes and GAPE. The model is then fine-tuned on FineWeb to align the pre-trained weights with the new positional encoding schemes and finally benchmarked against RULER as described. 
\vspace{-0.125cm}
\item \textbf{Mechanistic and Entropy Analysis.} For the entropy and mechanistic experiments, we use the same model trained for the OOD setup.
\end{itemize}
\vspace{-0.3cm}
Complete training configurations and hyperparameters are detailed in Appendix~\ref{app:training_setup}, while dataset details and statistics are reported in Appendix~\ref{app:dataset}.
\section{Conclusions and Limitations}
\label{sec:conclusion}
We introduced GAPE, a gated positional encoding that augments RoPE with a content-aware mask. Instead of modifying rotary geometry, GAPE separates local similarity from long-range context control through a query-dependent gate that suppresses irrelevant tokens and a key-dependent gate that preserves important distant ones. Our analysis shows that this decouples distance from importance: protected tokens remain accessible while unprotected mass is contracted according to the query's effective context. Empirically, GAPE reduces attention entropy and improves OOD length generalization over RoPE and $p$-RoPE, indicating that robust context extension requires not only improved positional extrapolation, but also selective suppression of irrelevant context. 
Our evaluation covers models up to 124M parameters; whether the gains persist at larger scales, extend to broader tasks such as multi-step reasoning or long-document summarization remains an open question for future work. Finally, while we provide initial evidence for the positive interaction between GAPE and RoPE interventions such as $p$-RoPE and YaRN, future work can investigate this further.
\printbibliography
\newpage
\appendix

\providecolor{Green}{RGB}{0,128,0}
\providecolor{Red}{RGB}{200,0,0}
\providecommand{\cmark}{\textcolor{Green}{\ding{51}}}
\providecommand{\xmark}{\textcolor{Red}{\ding{55}}}

\sisetup{
    mode=text,
    table-alignment-mode=none,
    reset-text-family = false,
    reset-text-series = false,
    reset-text-shape = false,
    table-number-alignment = center,
    table-align-comparator = false,
    table-align-text-pre = false,
    table-align-text-post = false,
    round-mode=uncertainty,
    round-precision=3
}

\definecolor{proofbar}{RGB}{225,225,225}
\definecolor{theorembar}{RGB}{128,0,128}
\definecolor{propositionbar}{RGB}{0,100,0}
\definecolor{definitionbar}{RGB}{0,0,200}
\definecolor{lemmabar}{RGB}{255,140,0}
\definecolor{corollarybar}{RGB}{204,153,255}
\definecolor{remarkbar}{RGB}{0,139,139}
\definecolor{assumptionbar}{RGB}{178,34,34}

\surroundwithmdframed[
  hidealllines=true,leftline=true,linecolor=proofbar,linewidth=2pt,
  innerleftmargin=6pt,innerrightmargin=0pt,innertopmargin=2pt,innerbottommargin=2pt
]{proof}

\surroundwithmdframed[
  hidealllines=true,leftline=true,linecolor=theorembar,linewidth=2pt,
  innerleftmargin=6pt,innerrightmargin=0pt,innertopmargin=2pt,innerbottommargin=2pt
]{theorem}
\surroundwithmdframed[
  hidealllines=true,leftline=true,linecolor=propositionbar,linewidth=2pt,
  innerleftmargin=6pt,innerrightmargin=0pt,innertopmargin=2pt,innerbottommargin=2pt
]{proposition}
\surroundwithmdframed[
  hidealllines=true,leftline=true,linecolor=remarkbar,linewidth=2pt,
  innerleftmargin=6pt,innerrightmargin=0pt,innertopmargin=2pt,innerbottommargin=2pt
]{remark}
\surroundwithmdframed[
  hidealllines=true,leftline=true,linecolor=definitionbar,linewidth=2pt,
  innerleftmargin=6pt,innerrightmargin=0pt,innertopmargin=2pt,innerbottommargin=2pt
]{definition}
\surroundwithmdframed[
  hidealllines=true,leftline=true,linecolor=lemmabar,linewidth=2pt,
  innerleftmargin=6pt,innerrightmargin=0pt,innertopmargin=2pt,innerbottommargin=2pt
]{lemma}
\surroundwithmdframed[
  hidealllines=true,leftline=true,linecolor=corollarybar,linewidth=2pt,
  innerleftmargin=6pt,innerrightmargin=0pt,innertopmargin=2pt,innerbottommargin=2pt
]{corollary}

\makeatletter
\@ifundefined{assumption}{\newtheorem{assumption}{Assumption}}{}
\makeatother
\surroundwithmdframed[
  hidealllines=true,leftline=true,linecolor=assumptionbar,linewidth=2pt,
  innerleftmargin=6pt,innerrightmargin=0pt,innertopmargin=2pt,innerbottommargin=2pt
]{assumption}


\newenvironment{restatedtheorem}[2][]{%
  \begin{mdframed}[
    hidealllines=true,leftline=true,linecolor=theorembar,linewidth=2pt,
    innerleftmargin=6pt,innerrightmargin=0pt,
    innertopmargin=2pt,innerbottommargin=2pt
  ]
  \noindent\textbf{Theorem~\ref{#2}}%
  \if\relax\detokenize{#1}\relax
  .%
  \else
  \textbf{ (#1).}%
  \fi
  \itshape
}{%
  \end{mdframed}
}

\newenvironment{restatedproposition}[2][]{%
  \begin{mdframed}[
    hidealllines=true,leftline=true,linecolor=propositionbar,linewidth=2pt,
    innerleftmargin=6pt,innerrightmargin=0pt,
    innertopmargin=2pt,innerbottommargin=2pt
  ]
  \noindent\textbf{Proposition~\ref{#2}}%
  \if\relax\detokenize{#1}\relax
  .%
  \else
  \textbf{ (#1).}%
  \fi
  \itshape
}{%
  \end{mdframed}
}

\newenvironment{restatedlemma}[2][]{%
  \begin{mdframed}[
    hidealllines=true,leftline=true,linecolor=lemmabar,linewidth=2pt,
    innerleftmargin=6pt,innerrightmargin=0pt,
    innertopmargin=2pt,innerbottommargin=2pt
  ]
  \noindent\textbf{Lemma~\ref{#2}}%
  \if\relax\detokenize{#1}\relax
  .%
  \else
  \textbf{ (#1).}%
  \fi
  \itshape
}{%
  \end{mdframed}
}

\newenvironment{restatedcorollary}[2][]{%
  \begin{mdframed}[
    hidealllines=true,leftline=true,linecolor=corollarybar,linewidth=2pt,
    innerleftmargin=6pt,innerrightmargin=0pt,
    innertopmargin=2pt,innerbottommargin=2pt
  ]
  \noindent\textbf{Corollary~\ref{#2}}%
  \if\relax\detokenize{#1}\relax
  .%
  \else
  \textbf{ (#1).}%
  \fi
  \itshape
}{%
  \end{mdframed}
}

\providecommand*{\theoremautorefname}{Theorem}
\providecommand*{\lemmaautorefname}{Lemma}
\providecommand*{\propositionautorefname}{Proposition}
\providecommand*{\definitionautorefname}{Definition}
\providecommand*{\corollaryautorefname}{Corollary}
\providecommand*{\remarkautorefname}{Remark}
\providecommand*{\exampleautorefname}{Example}
\providecommand*{\assumptionautorefname}{Assumption}

\makeatletter
\@ifundefined{ifshowcomments}{\newboolean{showcomments}}{}
\makeatother
\setboolean{showcomments}{true}
\providecommand{\fr}[1]{\textcolor{red}{\textbf{[FR: #1]}}}

\definecolor{Top1}{HTML}{0072B2}
\definecolor{Top2}{HTML}{E69F00}
\definecolor{Top3}{HTML}{CC79A7}

\usetikzlibrary{calc,positioning}

\definecolor{ufill}{HTML}{E1D5E7}
\definecolor{uborder}{HTML}{9673A6}
\definecolor{vfill}{HTML}{D5E8D4}
\definecolor{vborder}{HTML}{82B366}

\begin{center}
    {\LARGE \bfseries Remember to Forget:\\Gated Adaptive Positional
Encoding \par}
    \vspace{0.4em}
    {\Large Supplementary Material\par}
\end{center}

\vspace{1em}

\addcontentsline{toc}{part}{Supplementary Material}
\etocsetnexttocdepth{3}
\localtableofcontents

\newpage
\section{Gated Adaptive Positional Encoding: Proofs and Further Explanations}
\label{app:A}

In this section, we provide the proofs omitted from the theoretical results presented in Section~\ref{sec:GAPE} and Section~\ref{sec:selective_routing}. We also give additional details for several theoretical properties that are only stated in the main text.

\subsection{Landmark Protection}
\label{app:selective_routing}

We restate Theorem~\ref{thm:bilateral_invariant} and provide its proof.
\vspace{0.5em}

\begin{restatedtheorem}
{thm:bilateral_invariant}
Let $l_{j_{\mathrm{distant}}} \to 1$ for some 
$j_{\mathrm{distant}} \ll i$. Then $M_{i,i} = M_{i,j_{\mathrm{distant}}}$.
More generally, for any $a < b < i$ with $l_b < 1$, there exists a 
threshold $l_a^\ast \in (0,1)$ such that for all $l_a \in (l_a^\ast, 1]$: $M_{i,a} > M_{i,b}$.
\end{restatedtheorem}

\begin{proof}
Recall that:
$$    M_{i,j} = \frac{\Gamma_h g_i}{T}\big[j(1-l_j) + il_j\big]
$$
For the current token $j=i$, we have:
$$
    M_{i,i}
=
\frac{\Gamma_h g_i}{T}\big[i(1-l_i) + il_i\big]
=
\frac{\Gamma_h g_i}{T}i
$$
For a distant protected token with $l_{j_{\mathrm{distant}}}=1$, we get:
$$
    M_{i,j_{\mathrm{distant}}}
=
\frac{\Gamma_h g_i}{T}\big[j_{\mathrm{distant}}(1-1)+i(1)\big]
=
\frac{\Gamma_h g_i}{T}i
$$
Hence $M_{i,i}=M_{i,j_{\mathrm{distant}}}$. As concerns instead, the second claim, with $a < b < i$ and $l_b < 1$, we have:
$$
    M_{i,a} - M_{i,b}
=
\frac{\Gamma_h g_i}{T}
\Big[(a-b) + (i-a)l_a - (i-b)l_b\Big]
$$
Solving $M_{i,a} > M_{i,b}$ for $l_a$ yields:
$$
    l_a >
\frac{(i-b)l_b + (b-a)}{i-a}
\equiv l_a^\ast
$$
Since $a<b<i$ and $0 \le l_b < 1$, the threshold satisfies $0<l_a^\ast<1$.
Therefore, there exists a bounded threshold in $(0,1)$ beyond which the older token
structurally dominates the newer unprotected token.
\end{proof}

\subsection{Modulation of Recency Bias}
\label{app:recency_bias}

We restate Theorem~\ref{thm:asymptotic_eradication} and provide its proof.
\vspace{0.5em}
\begin{restatedtheorem}{thm:asymptotic_eradication}
Partition the context at step $i$ into a protected set $P_i = \{j < i : l_j \to 1\} \cup \{i\}$ and an unprotected set $U_i = \{k < i : l_k\not\to 1\}$. Let
$S_{\max} := \max_m \|\mathbf q_i\|\,\|\mathbf k_m\|$. Then, the total attention mass on $U_i$
satisfies:
\begin{equation}
    \sum_{k \in U_i} \alpha_{i,k}
\;\le\;
e^{2S_{\max}}
\sum_{k \in U_i}
\exp\!\left(-\frac{\Gamma_h g_i}{T}(i - k)\right)
\end{equation}
If every unprotected token satisfies $i - k \ge \Delta_{\min}$, this 
simplifies to:
\begin{equation}
    \sum_{k \in U_i^\ast} \alpha_{i,k}
\;\le\;
|U_i|\,\exp\!\left(2S_{\max}
- \frac{\Gamma_h g_i}{T}\Delta_{\min}\right)
\end{equation}
and the unprotected $(l_j<1)$ total attention weight vanishes exponentially as $g_i \to \infty$.
\end{restatedtheorem}

\begin{proof}
By Cauchy--Schwarz and the isometric
property of $R_\Theta$, we have
$|s_{i,m}| = |\mathbf q_i^\top R_\Theta \mathbf k_m|
\le \|\mathbf q_i\|\,\|\mathbf k_m\|$,
so $s_{i,m} \le S_{\max}$ for all $m$.
For any unprotected token $k \in U_i$, the softmax probability satisfies:
$$
    \alpha_{i,k}
=
\frac{\exp(a_{i,k})}{Z_i},
\qquad
Z_i = \sum_{m=0}^{i}\exp(a_{i,m})
$$
Since $Z_i \ge \exp(a_{i,i})$, we obtain:
$$
    \alpha_{i,k}
\le
\frac{\exp(a_{i,k})}{\exp(a_{i,i})}
=
\exp(s_{i,k}-s_{i,i})\,
\exp(M_{i,k}-M_{i,i})
$$
For an unprotected token, $l_k \to 0$ (evaluated at $l_k=0$), so:
$$
    M_{i,k} = \frac{\Gamma_h g_i}{T}k,
\qquad
M_{i,i} = \frac{\Gamma_h g_i}{T}i
$$
Hence:
$$
    M_{i,k}-M_{i,i}
=
-\frac{\Gamma_h g_i}{T}(i-k)
$$
Therefore:
$$
    \alpha_{i,k}
\le
\exp(s_{i,k}-s_{i,i})\,
\exp\!\left(
-\frac{\Gamma_h g_i}{T}(i-k)
\right)
$$
Using the boundedness assumption, $|s_{i,k}-s_{i,i}| \le 2S_{\max}$, so:
$$
    \alpha_{i,k}
\le
e^{2S_{\max}}
\exp\!\left(
-\frac{\Gamma_h g_i}{T}(i-k)
\right)
$$
Summing over $k \in U_i$ gives:
$$
    \sum_{k \in U_i}\alpha_{i,k}
\le
e^{2S_{\max}}
\sum_{k \in U_i}
\exp\!\left(
-\frac{\Gamma_h g_i}{T}(i-k)
\right)
$$
If, moreover, $i-k \ge \Delta_{\min}$ for all $k \in U_i$, then each term in the sum is bounded by $\exp\!\left(-\frac{\Gamma_h g_i}{T}\Delta_{\min}\right)$, yielding: 
$$
    \sum_{k \in U_i}\alpha_{i,k}
\le
|U_i|\,e^{2S_{\max}}
\exp\!\left(
-\frac{\Gamma_h g_i}{T}\Delta_{\min}
\right)
$$
This decays exponentially as $g_i\to\infty$, so the total unprotected mass vanishes.
\end{proof}

\subsection{Softmax Normalization Growth} 
\label{app:partition_growth}

A fundamental challenge in long-context attention is the unbounded growth of the softmax partition function as background tokens accumulate. This inflates the denominator, diluting the probability mass across the sequence and reducing the attention weight assigned to target tokens. \textsc{GAPE} mitigates this structurally. By forcing the contribution of unprotected tokens to decay exponentially, it ensures the partition function remains dominated by the protected set $P_i$. This mathematically limits the total attention mass that irrelevant context can absorb. The following results formalize this guarantee and connect it to the mechanisms established in Section~\ref{sec:selective_routing}.
\vspace{0.5em}
\begin{lemma}
\label{lem:partition_growth}
Let
\begin{equation}
\label{eq:partition_growth_Z}
Z_i
=
\sum_{m=0}^{i}
\exp\!\bigl(s_{i,m}+M_{i,m}\bigr)
\end{equation}
denote the softmax normalization term at decoding position $i$.
Define:
\begin{equation}
\label{eq:partition_growth_C}
C_i
=
\frac{\Gamma_h g_i}{T},
\qquad
C_{\min}
=
\frac{\Gamma_h \epsilon}{T}
\end{equation}
and assume the semantic scores are uniformly bounded:
\begin{equation}
\label{eq:partition_growth_scores}
s_{i,m}\in[S_{\min},S_{\max}],
\qquad
g_i\ge\epsilon>0
\end{equation}

Let:
\begin{equation}
\label{eq:partition_growth_sets}
P_i
=
\{j<i:l_j\to1\}\cup\{i\},
\qquad
U_i
=
\{k<i:l_k\not\to1\}
\end{equation}
denote the protected and unprotected sets, respectively.
Assume:
\begin{equation}
\label{eq:partition_growth_pmax}
|P_i|
\le
P_{\max}
<
\infty
\end{equation}
uniformly in $i$.

Suppose the mask satisfies:
\begin{equation}
\label{eq:partition_growth_mask}
M_{i,j}
=
C_i i
\quad
\text{for } j\in P_i
\end{equation}
while for every unprotected token $k\in U_i$,
\begin{equation}
\label{eq:partition_growth_unprotected}
M_{i,k}
\le
C_i k
\end{equation}

Then:
\begin{equation}
\label{eq:partition_growth_bound}
Z_i
\le
e^{S_{\max}+C_i i}
\left(
P_{\max}
+
\frac{1}{e^{C_{\min}}-1}
\right)
\end{equation}

In particular, the contribution of unprotected tokens remains uniformly controlled relative to the protected survival level.
\end{lemma}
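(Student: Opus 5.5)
We split $Z_i$ along the partition $\{0,\dots,i\}=P_i\cup U_i$ and bound each part by a multiple of the common factor $e^{S_{\max}+C_i i}$. For the protected part, the hypothesis $M_{i,j}=C_i i$ for $j\in P_i$ together with $s_{i,j}\le S_{\max}$ gives
\begin{equation*}
\sum_{j\in P_i}\exp\bigl(s_{i,j}+M_{i,j}\bigr)\le |P_i|\,e^{S_{\max}+C_i i}\le P_{\max}\,e^{S_{\max}+C_i i}.
\end{equation*}
This reuses the computation from the proof of Theorem~\ref{thm:bilateral_invariant}, where protected tokens sit exactly at the survival level $M_{i,i}$.

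For the unprotected part, we use $M_{i,k}\le C_i k$ and again $s_{i,k}\le S_{\max}$. We factor out $e^{S_{\max}+C_i i}$, which leaves $\sum_{k\in U_i}e^{-C_i(i-k)}$. Since $U_i\subseteq\{0,\dots,i-1\}$, the distances $i-k$ are distinct positive integers. The sum is therefore at most $\sum_{n\ge1}e^{-C_i n}=\frac{1}{e^{C_i}-1}$. Because $g_i\ge\epsilon$, we have $C_i\ge C_{\min}>0$, and $x\mapsto 1/(e^x-1)$ is decreasing. This gives the bound $\frac{1}{e^{C_{\min}}-1}$, which is uniform in $i$. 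This is the same geometric-decay mechanism as in Theorem~\ref{thm:asymptotic_eradication}, but summed over all distances instead of using the crude $|U_i|\,e^{-C_i\Delta_{\min}}$ bound.

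Adding the two parts yields \eqref{eq:partition_growth_bound}. The final remark then follows by comparing with the lower bound $Z_i\ge e^{S_{\min}+C_i i}$ from the $j=i$ term. Relative to the protected survival level, the unprotected share of $Z_i$ is at most $e^{S_{\max}-S_{\min}}/(e^{C_{\min}}-1)$, independent of $i$.

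The main obstacle is not the algebra but the interpretation of the idealized sets. $P_i$ is defined with $l_j\to1$, but the mask identity $M_{i,j}=C_i i$ only holds in the limit $l_j=1$. We therefore take the hypotheses \eqref{eq:partition_growth_mask}–\eqref{eq:partition_growth_unprotected} as given exactly, as the lemma does, rather than deriving them from the gates. For genuine $l_k\in[0,1)$ one has $M_{i,k}=C_i[k+(i-k)l_k]\ge C_i k$, so \eqref{eq:partition_growth_unprotected} holds only in the idealized $l_k=0$ regime. I would state this explicitly rather than try to prove it.

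The other point needing care is the injectivity of $k\mapsto i-k$ on $U_i$. This is what lets us replace the sum over $U_i$ by a full geometric series, and it is what makes the bound uniform in $i$ instead of growing with $|U_i|$.
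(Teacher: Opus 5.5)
Your proof is correct and follows the paper's argument step for step. You split $Z_i$ over $P_i$ and $U_i$, factor out $e^{S_{\max}+C_i i}$, bound $\sum_{k\in U_i}e^{-C_i(i-k)}$ by the full geometric series $\sum_{r\ge1}e^{-C_i r}=1/(e^{C_i}-1)$ (your injectivity remark is the paper's reindexing $r=i-k$ after extending the sum to all $k<i$), and then use $C_i\ge C_{\min}$. Your side observation is also accurate: $M_{i,k}=C_i[k+(i-k)l_k]\ge C_i k$, so the unprotected hypothesis effectively forces $l_k=0$, and the lemma simply assumes this idealized regime.
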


\begin{proof}
For every protected token $j\in P_i$, we have:
\[M_{i,j}=C_i i\]
For every unprotected token $k\in U_i$,
\[
M_{i,k}\le C_i k
\]
Therefore:
\[
Z_i
=
\sum_{j\in P_i}
\exp(s_{i,j}+C_i i)
+
\sum_{k\in U_i}
\exp(s_{i,k}+M_{i,k})
\]
Using $s_{i,m}\le S_{\max}$ and $M_{i,k}\le C_i k$ gives:
\[
Z_i
\le
e^{S_{\max}+C_i i}
|P_i|
+
e^{S_{\max}}
\sum_{k\in U_i}
e^{C_i k}
\]
Rewrite the second term as:
\[
e^{S_{\max}}
\sum_{k\in U_i}
e^{C_i k}
=
e^{S_{\max}+C_i i}
\sum_{k\in U_i}
e^{-C_i(i-k)}
\]
Hence,
\[
Z_i
\le
e^{S_{\max}+C_i i}
\left(
|P_i|
+
\sum_{k=0}^{i-1}
e^{-C_i(i-k)}
\right)
\]
Setting $r=i-k$ yields:
\[
\sum_{k=0}^{i-1}
e^{-C_i(i-k)}
=
\sum_{r=1}^{i}
e^{-C_i r}
\le
\sum_{r=1}^{\infty}
e^{-C_i r}
=
\frac{1}{e^{C_i}-1}
\]
Since $C_i\ge C_{\min}$,
\[
\frac{1}{e^{C_i}-1}
\le
\frac{1}{e^{C_{\min}}-1}
\]
Combining the bounds and using
$|P_i|\le P_{\max}$ gives:
\[
Z_i
\le
e^{S_{\max}+C_i i}
\left(
P_{\max}
+
\frac{1}{e^{C_{\min}}-1}
\right)
\]
which proves the claim.
\end{proof}

\subsection{Entropy Contraction}
\label{app:entropy_contraction}

\vspace{0.5em}
\begin{corollary}\label{cor:entropy_collapse}
Assume the asymptotic regime where the landmark gate for structurally protected tokens is fully saturated ($l_j \to 1$). Let $\alpha_i^\ast$ denote the renormalized attention distribution supported on this protected set $P_i$, i.e.:
\begin{equation}
    \alpha_{i,j}^\ast =
\frac{\exp(a_{i,j})}{\sum_{m\in P_i}\exp(a_{i,m})},
\qquad j\in P_i
\end{equation}
Under \autoref{thm:bilateral_invariant} and \autoref{thm:asymptotic_eradication}, the attention distribution $\alpha_i$ converges in total variation (TV) to the extension of $\alpha_i^\ast$ by zeros on $U_i$ as $g_i\to\infty$. Consequently, we get: 
\begin{equation}
    \lim_{g_i\to\infty} H(\alpha_i) = H(\alpha_i^\ast)
\end{equation}
Moreover, the relative probabilities inside $P_i$ are independent of $g_i$.
\end{corollary}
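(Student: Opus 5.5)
We first establish the last claim, since it holds for every value of $g_i$ and drives the rest of the argument. By \autoref{thm:bilateral_invariant}, every $j\in P_i$ (saturated $l_j=1$, together with $j=i$) carries the same mask value $M_{i,j}=C_i i$ with $C_i=\Gamma_h g_i/T$. Writing $a_{i,j}=s_{i,j}+C_i i$ for $j\in P_i$, the common factor $e^{C_i i}$ cancels in the ratio defining $\alpha^\ast_{i,j}$. Hence
\[
\alpha^\ast_{i,j}=\frac{e^{s_{i,j}}}{\sum_{m\in P_i}e^{s_{i,m}}},
\]
and this does not depend on $g_i$. The point to record is that the $g_i$-dependence is entirely confined to the unprotected logits.

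Next we set $\varepsilon_i:=\sum_{k\in U_i}\alpha_{i,k}$. Because $\alpha_{i,j}$ and $\alpha^\ast_{i,j}$ differ on $P_i$ only by the normalizer, we have the exact identity $\alpha_{i,j}=(1-\varepsilon_i)\,\alpha^\ast_{i,j}$ for $j\in P_i$. The total variation distance to the zero-extended $\alpha_i^\ast$ is then
\[
\tfrac12\Big(\textstyle\sum_{j\in P_i}\varepsilon_i\alpha^\ast_{i,j}+\sum_{k\in U_i}\alpha_{i,k}\Big)=\varepsilon_i .
\]
By \autoref{thm:asymptotic_eradication}, with unprotected tokens evaluated at $l_k=0$ as in its proof,
\[
\varepsilon_i\le e^{2S_{\max}}\sum_{k\in U_i}e^{-C_i(i-k)} .
\]
Every $k\in U_i$ satisfies $i-k\ge1$ and $U_i$ is finite at fixed $i$, so this bound is at most $|U_i|e^{2S_{\max}}e^{-\Gamma_h g_i/T}\to0$. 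The semantic scores $s_{i,m}$ are treated as fixed while $g_i$ is sent to infinity, which is the same idealization used in \autoref{thm:asymptotic_eradication}. This gives TV convergence.

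For the entropy we would not invoke a general continuity theorem. Instead we decompose directly, writing $h(x)=-x\log x$:
\[
H(\alpha_i)=\sum_{j\in P_i}h\big((1-\varepsilon_i)\alpha^\ast_{i,j}\big)+\sum_{k\in U_i}h(\alpha_{i,k}).
\]
The first sum expands exactly to $(1-\varepsilon_i)H(\alpha_i^\ast)+h(1-\varepsilon_i)$, which tends to $H(\alpha_i^\ast)$. For the second sum we use the grouping bound $\sum_k h(\alpha_{i,k})\le h(\varepsilon_i)+\varepsilon_i\log|U_i|$, which follows because entropy on $|U_i|$ atoms is at most $\log|U_i|$. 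This tends to $0$ since $h(\varepsilon)\to0$ as $\varepsilon\to0^+$.

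The main obstacle is this last step. Entropy is only uniformly continuous in TV on a finite alphabet, so the argument genuinely needs $|U_i|<\infty$, i.e.\ a fixed position $i$. We would state this explicitly, and also note that the sum over $U_i$ is bounded using the exponential rate from \autoref{thm:asymptotic_eradication} rather than any uniform-in-$i$ control.
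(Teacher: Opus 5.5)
Your proposal is correct and follows the paper's three-step argument. Theorem~\ref{thm:bilateral_invariant} makes the mask constant on $P_i$, so the conditional law there is independent of $g_i$; the TV distance reduces exactly to the unprotected mass $\varepsilon_i=\sum_{k\in U_i}\alpha_{i,k}$; and Theorem~\ref{thm:asymptotic_eradication} sends that mass to zero. The only difference is the final step: instead of appealing to continuity of entropy on the finite simplex, you use the explicit decomposition $H(\alpha_i)=(1-\varepsilon_i)H(\alpha_i^\ast)+h(1-\varepsilon_i)+\sum_{k\in U_i}h(\alpha_{i,k})$, which is equally valid and also gives an explicit convergence rate.
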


\begin{proof}
We proceed in three steps.

\textbf{Step 1.} By \autoref{thm:asymptotic_eradication}, the total attention weight assigned to the unprotected set satisfies: 
\[
\sum_{k \in U_i} \alpha_{i,k} \;\longrightarrow\; 0 \qquad \text{as } g_i \to \infty
\]
Hence, asymptotically, all probability mass concentrates on $P_i$.

\textbf{Step 2.} By \autoref{thm:bilateral_invariant}, all tokens in $P_i$ share the same structural level. Therefore, for any $j,m \in P_i$, the difference of logits satisfies $a_{i,j} - a_{i,m} = s_{i,j} - s_{i,m}$, so the structural term cancels. As a result, the conditional distribution on $P_i$ is:
\[
\frac{\alpha_{i,j}}{\sum_{m\in P_i}\alpha_{i,m}}
= \frac{\exp(s_{i,j})}{\sum_{m\in P_i}\exp(s_{i,m})}
= \alpha_{i,j}^\ast
\]
which is independent of $g_i$.

\textbf{Step 3.} Let $\bar{\alpha}_i^\ast$ denote the extension of $\alpha_i^\ast$ by zeros on $U_i$. Then:
\[
\|\alpha_i - \bar{\alpha}_i^\ast\|_{\mathrm{TV}}
= \frac{1}{2} \sum_{j=0}^i \left| \alpha_{i,j} - \bar{\alpha}_{i,j}^\ast \right|
\]
Split the sum over $P_i$ and $U_i$:
\[
\|\alpha_i - \bar{\alpha}_i^\ast\|_{\mathrm{TV}}
= \frac{1}{2} \left(
\sum_{j \in P_i} \left| \alpha_{i,j} - \alpha_{i,j}^\ast \right|+\sum_{k \in U_i} \alpha_{i,k}\right)
\]
The second term vanishes by Step 1. For the first term, write $\alpha_{i,j} = \alpha_{i,j}^\ast \cdot \sum_{m\in P_i} \alpha_{i,m}$. From this, we get: 
\[
\left| \alpha_{i,j} - \alpha_{i,j}^\ast \right|
= \alpha_{i,j}^\ast \left| \sum_{m\in P_i} \alpha_{i,m} - 1 \right|
\]
Summing over $j \in P_i$ gives:
\[
\sum_{j \in P_i} \left| \alpha_{i,j} - \alpha_{i,j}^\ast \right|
= \left| \sum_{m\in P_i} \alpha_{i,m} - 1 \right|
= \sum_{k \in U_i} \alpha_{i,k}
\]
which also vanishes. Therefore: $\|\alpha_i - \bar{\alpha}_i^\ast\|_{\mathrm{TV}} \;\longrightarrow\; 0$.

Since the probability simplex is finite-dimensional, Shannon entropy is continuous. Therefore, convergence in total variation implies $H(\alpha_i) \;\longrightarrow\; H(\bar{\alpha}_i^\ast)$. Because $\bar{\alpha}_i^\ast$ assigns zero mass to $U_i$, its entropy reduces to that of $\alpha_i^\ast$: $H(\bar{\alpha}_i^\ast) = H(\alpha_i^\ast)$.

The attention distribution converges to the protected-set distribution, and its entropy converges accordingly. The conditional distribution on $P_i$ remains invariant throughout, completing the proof.
\end{proof}

\subsection{Hardware-Native Factorisation}
\label{app:hardware_factorisation}

We restate Proposition~\ref{prop:rank2_factorization} and provide its proof.
\vspace{0.5em}

\begin{restatedproposition}{prop:rank2_factorization}
Let $T$ denote the training context length and $d$ the full attention head dimension. The GAPE mask $M_{i,j}$ and the mask $\widehat{M}_{i,j}$ are equivalent post-softmax. Furthermore, $M_{i,j}$ can be computed within standard fused 
Scaled Dot-Product Attention (SDPA) without materializing an explicit $N \times N$ bias matrix. 
\end{restatedproposition}

\begin{proof}
\textbf{Step 1: Post-softmax equivalence of $\widehat{M}_{i,j}$ and $M_{i,j}$.}
We first show that the mask $M_{i,j}$
differs from $\widehat{M}_{i,j}$ by an additive term that depends only on the query index $i$. Direct computation gives:
\begin{align*}
    \widehat{M}_{i,j} - M_{i,j}
    &= -\frac{\Gamma_h g_i}{T}\bigl[(1-l_j)(i-j) + j(1-l_j) + i\,l_j\bigr] \\
    &= -\frac{\Gamma_h g_i}{T}\bigl[(1-l_j)\bigl((i-j)+j\bigr) + i\,l_j\bigr] \\
    &= -\frac{\Gamma_h g_i}{T}\bigl[(1-l_j)\,i + i\,l_j\bigr] \\
    &= -\frac{\Gamma_h g_i}{T}\,i
    \;=:\; c_i
\end{align*}
Hence
\begin{equation*}
    \bigl(s_{i,j} + \widehat{M}_{i,j}\bigr) - \bigl(s_{i,j} + M_{i,j}\bigr) \;=\; c_i,
    \qquad \text{independent of } j
\end{equation*}
Because softmax is invariant under additive shifts that depend only on the query index, for every fixed $i$,
\begin{equation*}
    \frac{\exp\!\bigl(s_{i,j} + \widehat{M}_{i,j}\bigr)}{\sum_{j'}\exp\!\bigl(s_{i,j'} + \widehat{M}_{i,j'}\bigr)}
    \;=\;
    \frac{e^{c_i}\exp\!\bigl(s_{i,j} + M_{i,j}\bigr)}{e^{c_i}\sum_{j'}\exp\!\bigl(s_{i,j'} + M_{i,j'}\bigr)}
    \;=\;
    \frac{\exp\!\bigl(s_{i,j} + M_{i,j}\bigr)}{\sum_{j'}\exp\!\bigl(s_{i,j'} + M_{i,j'}\bigr)},
\end{equation*}
so the two masks induce identical attention distributions and, consequently, identical attention outputs and gradients with respect to the value projection. The equivalence is exact and holds for any choice of the learnable parameters $\gamma_h$, $\mathbf{w}_g$, $b_g$, $\mathbf{w}_l$, $b_l$.

\textbf{Step 2: Fused-SDPA realisation of $M_{i,j}$.}
Let $\mathbf{q}_i,\mathbf{k}_j \in \mathbb{R}^{d-2}$ denote the content vectors, optionally after a multiplicative positional mechanism such as $p$-RoPE. We reserve the last two coordinates of the $d$-dimensional head space for the structural envelope and define:
$$    \tilde{\mathbf{q}}_i =
    \begin{bmatrix}
    \mathbf{q}_i \\
    \Gamma_h g_i \sqrt{d} \\
    \Gamma_h g_i \left(\frac{i}{T}\right)\sqrt{d}
    \end{bmatrix},
    \qquad
    \tilde{\mathbf{k}}_j =
    \begin{bmatrix}
    \mathbf{k}_j \\
    \dfrac{j(1-l_j)}{T} \\
    l_j
    \end{bmatrix}
$$
where $\Gamma_h=\operatorname{Softplus}(\gamma_h)$ is the head amplitude, $g_i \in \mathbb{R}^{+}$ is the query gate, and $l_j \in [0,1]$ is the landmark gate. Under the standard SDPA scaling:
\begin{align*}
    a_{i,j}
    &= \frac{1}{\sqrt{d}}\, \tilde{\mathbf{q}}_i^\top \tilde{\mathbf{k}}_j \\
    &= \frac{1}{\sqrt{d}}\, \mathbf{q}_i^\top \mathbf{k}_j
    \;+\; \frac{\Gamma_h g_i}{T}\bigl[\,j(1-l_j) + i\,l_j\,\bigr] \\
    &= s_{i,j} + M_{i,j}
\end{align*}
Thus, the structural term is realized exactly as an additive low-rank correction inside the standard attention computation. Since the representation remains a scaled dot product, fused kernels such as FlashAttention-2 evaluate $M_{i,j}$ implicitly, without any explicit $N \times N$ mask. Combined with Step~1, this proves that the GAPE bias $\widehat{M}_{i,j}$ admits a post-softmax-equivalent realisation that is fully compatible with fused causal SDPA.
\end{proof}

\subsection{Translation Invariance}
\label{app:translation_invariance}

GAPE preserves the relative geometry of the attention pattern under global shifts of the input sequence. Since the structural term enters additively through the relative distance $(i - j)$, translating all positions by a constant offset shifts all logits by the same constant. This uniform shift cancels under softmax normalization, leaving the attention weights unchanged. As a result, GAPE induces translation-invariant attention distributions.
\vspace{0.5em}

\begin{proposition}
\label{prop:translation_invariance}
Let the input sequence be shifted by a constant offset $\Delta>0$, so that $i'=i+\Delta$ and $j'=j+\Delta$. Assume that the semantic scores and gates are translation-invariant, i.e. $s_{i',j'}=s_{i,j}$, $g_{i'}=g_i$, and $l_{j'}=l_j$. Then the induced softmax distribution is translation-invariant: $\alpha_{i',j'}=\alpha_{i,j}$ for all $j\le i$.
\end{proposition}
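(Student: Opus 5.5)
The plan is to work with the original relative-form mask $\widehat{M}_{i,j}$ rather than $M_{i,j}$. By Proposition~\ref{prop:rank2_factorization} the two induce the same softmax distribution, since they differ only by the query-dependent constant $c_i = -\Gamma_h g_i\, i/T$. So it suffices to show that the logits $\hat a_{i,j} = s_{i,j} + \widehat{M}_{i,j}$ are invariant under the shift. Concretely, we would show $\hat a_{i',j'} = \hat a_{i,j}$ for every $j \le i$, and then conclude by normalizing over the shifted causal window $\{j' : j' \le i'\} = \{j+\Delta : j \le i\}$.

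\textbf{Step 1.} We establish logit invariance. By the stated assumptions, $s_{i',j'} = s_{i,j}$, $g_{i'} = g_i$ and $l_{j'} = l_j$. The quantities $\Gamma_h$ and $T$ are position-independent. The only positional quantity in $\widehat{M}$ is $i'-j' = i-j$. Hence
\begin{equation*}
\widehat{M}_{i',j'} = -\Gamma_h g_{i'}(1-l_{j'})\frac{i'-j'}{T} = -\Gamma_h g_i (1-l_j)\frac{i-j}{T} = \widehat{M}_{i,j}.
\end{equation*}

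\textbf{Step 2.} We match the index sets. Under causal masking, the shifted query $i'$ attends to $j' \in \{\Delta,\dots,i+\Delta\}$, which is in bijection with $j \in \{0,\dots,i\}$. Positions $j' < \Delta$ are not part of the shifted sequence, so they do not appear. The partition function therefore reindexes term by term, and $\alpha_{i',j'} = \alpha_{i,j}$.

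\textbf{Step 3.} As an alternative presentation, we could work directly with $M_{i,j}$. A direct calculation gives
\begin{equation*}
M_{i',j'} = M_{i,j} + \frac{\Gamma_h g_i}{T}\Delta\bigl[(1-l_j) + l_j\bigr] = M_{i,j} + \frac{\Gamma_h g_i \Delta}{T}.
\end{equation*}
This is a shift that depends only on $i$ and cancels under softmax. It gives the same conclusion without routing through $\widehat M$.

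The main obstacle is not computational. It is being careful about what ``the input sequence shifted'' means, so that the causal supports actually correspond. The assumption on the gates also deserves comment: $g_i$ and $l_j$ are computed from $\mathbf q_i$ and $\mathbf k_j$, which in a RoPE backbone are position-dependent only through $R_\Theta$, and RoPE depends only on $i-j$. We would state this explicitly and rely on the hypotheses as given.
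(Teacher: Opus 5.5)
Your proof is correct, and your Step~3 is exactly the paper's argument. The paper sets $C=\Gamma_h g_i/T$ and computes $M_{i',j'}=M_{i,j}+C\Delta$ directly from the $M$-form of the mask. It then observes that this per-query constant cancels in the softmax.

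Your primary route is different. You pass to the relative form $\widehat{M}$, for which the shifted logits are \emph{exactly} unchanged, and then transfer the conclusion back to $M$ via Proposition~\ref{prop:rank2_factorization}. The two routes are the same identity seen from opposite sides: the paper's offset $\Gamma_h g_i\Delta/T$ is precisely $c_i-c_{i'}$, the difference of the query-dependent constants from Proposition~\ref{prop:rank2_factorization} at $i$ and at $i'=i+\Delta$.

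Each route has its advantage:
\begin{itemize}
\item Yours makes the reason for invariance transparent, since position enters $\widehat{M}$ only through $i-j$. It also gives a stronger, pre-softmax statement, but it depends on the earlier proposition.
\item The paper's is self-contained and checks the property directly for the mask actually realised in SDPA.
\end{itemize}

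Your explicit matching of the causal windows in Step~2 is a point the paper leaves implicit; it only asserts that the constant appears in both numerator and denominator. Spelling it out is a real improvement in care.

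One small caveat on your closing remark. RoPE applied to individual vectors uses absolute rotations $R_\Theta(i)\mathbf{q}_i$ and $R_\Theta(j)\mathbf{k}_j$; only their inner product depends on $i-j$. So translation invariance of the gates is not a consequence of the rotary structure. It holds only if the gates read the unrotated projections, which is why it must stay a hypothesis, as you rightly keep it.
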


\begin{proof}
Let $C=\frac{\Gamma_h g_i}{T}$. Then the shifted mask can be defined as: 
\begin{align*}
    M_{i',j'}
    &= C\Bigl[(j+\Delta)(1-l_j) + (i+\Delta)l_j\Bigr] \\
    &= C\Bigl[j(1-l_j) + i l_j\Bigr] + C\Delta \\
    &= M_{i,j} + C\Delta
\end{align*}
Hence, every shifted logit receives the same additive offset:
$$    a_{i',j'} = s_{i',j'} + M_{i',j'} = s_{i,j} + M_{i,j} + C\Delta$$
Since this constant appears in both both numerator and the denominator of the softmax, it cancels exactly. Therefore $\alpha_{i',j'}=\alpha_{i,j}$.
\end{proof}

\subsection{Mitigating Phase Hallucinations}
\label{app:bounding_hallucinations}

We restate and extend \autoref{thm:dynamic_and_asymptotic}, and provide its proof. 
\vspace{0.5em}

\begin{restatedtheorem}{thm:dynamic_and_asymptotic}
Assume that the semantic scores are uniformly bounded as $|s_{i,m}| \le S_{\max}$ for all $m$, and let $g_i>0$. Define: 
\begin{equation}
    P_i = \{j<i : l_j \to 1\} \cup \{i\} \qquad
U_i = \{k<i : l_k\not\to 1\}
\end{equation}
For any unprotected token $j \in U_i$, let $\Delta=i-j$. Then the logit gap between the current protected token and $j$ satisfies: 
\begin{equation}
    a_{i,i} - a_{i,j}\ge-2S_{\max} + \frac{\Gamma_h g_i}{T}\Delta
\end{equation}
In particular, if: 
\begin{equation}
    \Delta > \frac{2S_{\max}T}{\Gamma_h g_i}
\end{equation}
Then, the hallucinated alignment can be eliminated by choosing any gate value: 
\begin{equation}
    g_i^\ast > \frac{2S_{\max}T}{\Gamma_h \Delta}
\end{equation}
Moreover, for any fixed filler position $k$ and any $i>k$, the minimum gate required to separate the current protected token from $k$ is:
\begin{equation}
    g_{\min}^{(k)}(i) = \frac{2S_{\max}T}{\Gamma_h(i-k)}
\end{equation}
and therefore: 
\begin{equation}
    \lim_{i\to\infty} g_{\min}^{(k)}(i)=0
\end{equation}
\end{restatedtheorem}
\begin{proof}
Since $i\in P_i$, we have $l_i=1$, so the structural term of the current token is fully lifted:
\[M_{i,i}=\frac{\Gamma_h g_i}{T}i\]
For any unprotected token $j\in U_i$, we have $l_j=0$, hence:\[M_{i,j}=\frac{\Gamma_h g_i}{T}j\]
Therefore:
\[a_{i,i}-a_{i,j}=(s_{i,i}-s_{i,j}) + (M_{i,i}-M_{i,j})=(s_{i,i}-s_{i,j}) + \frac{\Gamma_h g_i}{T}(i-j)\] Using $|s_{i,i}|\le S_{\max}$ and $|s_{i,j}|\le S_{\max}$ gives: \[ s_{i,i}-s_{i,j}\ge -2S_{\max}\]
and hence:
\[a_{i,i}-a_{i,j}\ge-2S_{\max} + \frac{\Gamma_h g_i}{T}\Delta\] If $\Delta > \frac{2S_{\max}T}{\Gamma_h g_i}$, then choosing any: \[g_i^\ast > \frac{2S_{\max}T}{\Gamma_h \Delta}\]
makes the right-hand side strictly positive, so the current protected token strictly dominates the unprotected token in the attention logits. For the asymptotic statement, fix $k$ and set $\Delta=i-k$. The smallest gate ensuring strict separation is therefore:
\[g_{\min}^{(k)}(i)=\frac{2S_{\max}T}{\Gamma_h(i-k)}\]
Since $k$ is fixed, $i-k\to\infty$ as $i\to\infty$, and thus:
\[\lim_{i\to\infty} g_{\min}^{(k)}(i)=0\]
This proves the theorem.
\end{proof}

\begin{figure}[htbp]
    \centering
    \includegraphics[width=\linewidth]{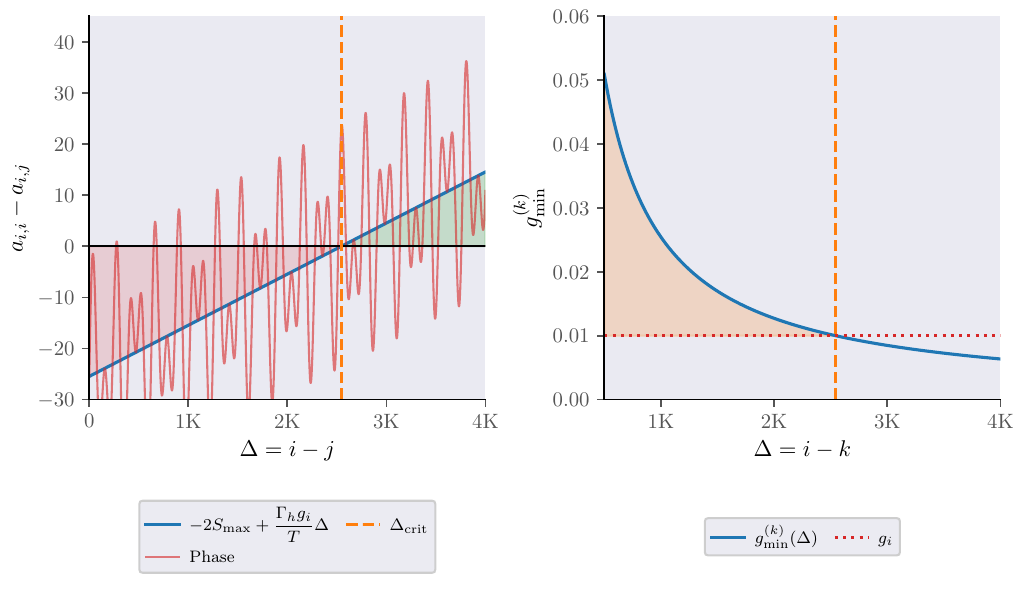}
    \caption{
    \emph{Visualisation of the magnitude-envelope bound.}
    \textbf{(a)} The lower bound on the logit gap $a_{i,i}-a_{i,j}$ grows linearly with the distance $\Delta=i-j$. At short distances, bounded phase fluctuations can still produce spurious alignments; as $\Delta$ increases, the structural term dominates and unprotected tokens are progressively suppressed. \textbf{(b)} The minimum gate activation $g_{\min}^{(k)}$ required to suppress an unprotected token at distance $\Delta$ decreases monotonically with distance, confirming that GAPE becomes easier to stabilise as context length grows: nearby spurious alignments require stronger gating, while distant ones are suppressed even by a modest query gate.
    }
    \label{fig:theorem3_desmos}
\end{figure}

\noindent\textbf{Interpretation.}
\autoref{fig:theorem3_desmos} provides an intuitive picture of \autoref{thm:dynamic_and_asymptotic}. The left panel shows that the semantic phase term may oscillate, but its magnitude remains bounded by activation normalization. In contrast, the GAPE structural separation grows linearly with distance. Once the distance exceeds $\Delta_{\mathrm{crit}}$, the lower bound on the protected--unprotected logit gap becomes positive, so an unprotected phase hallucination can no longer dominate the current protected token. The right panel illustrates the adaptive version of the same argument: for short distances, a larger query gate is needed to contract the horizon, while at longer distances even the baseline gate is sufficient. Thus, GAPE becomes easier to stabilise as context length grows: distant unprotected tokens are increasingly suppressed, whereas protected landmarks remain available through their landmark gate.

\subsection{GAPE Theoretical Properties in Needle-In-A-Haystack Retrieval}
\label{app:niah_retrieval}

To rigorously isolate the properties of the mask from the noisy dynamics of real-world attention, the following results formalize the mechanistic behavior of \textsc{GAPE} during long-range retrieval within an idealized mathematical setting. By bounding the maximum possible semantic interaction, we mathematically characterize the exact boundary conditions under which a distant token can circumvent the structural penalty. This establishes the theoretical regime where landmark-based preservation is required for signal retrieval, yielding a formal retrieval threshold, an absolute spatial cutoff, and a deterministic guarantee for suppression of unprotected tokens.

\vspace{0.5em}

\begin{proposition}
\label{prop:niah_threshold}
Let $k < i$ denote the absolute position of a target needle token within a sequence of length $i$, and let $\rho = k/i \in (0,1)$ denote its relative depth. The model prioritizes the historical needle over the current token's self-attention whenever $a_{i,k} > a_{i,i}$. Under \textsc{GAPE}, this retrieval succeeds if and only if the semantic score differential overcomes the structural penalty:
\begin{equation}
    s_{i,k} - s_{i,i} > \frac{\Gamma_h g_i}{T}\, i(1-\rho)(1-l_k)
\end{equation}
\end{proposition}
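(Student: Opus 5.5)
The plan is to compute $a_{i,k}-a_{i,i}$ directly from the decomposition $a_{i,j}=s_{i,j}+M_{i,j}$ with the GAPE mask of Eq.~\eqref{eq:gape}. By Proposition~\ref{prop:rank2_factorization}, the result would be the same if we used $\widehat{M}_{i,j}$ instead, since the two masks differ by a shift depending only on $i$. Everything reduces to one algebraic identity for the mask difference, and the ``if and only if'' then follows by rearrangement.

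\textbf{Step 1: the current token.} As in the proof of Theorem~\ref{thm:bilateral_invariant}, we have $M_{i,i}=\frac{\Gamma_h g_i}{T}\,i$, regardless of the value of $l_i$.

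\textbf{Step 2: the needle.} For the needle token,
\[
M_{i,k}=\frac{\Gamma_h g_i}{T}\bigl[k(1-l_k)+i\,l_k\bigr].
\]
Subtracting Step 1 gives
\[
M_{i,k}-M_{i,i}=\frac{\Gamma_h g_i}{T}\bigl[k(1-l_k)-i(1-l_k)\bigr]=-\frac{\Gamma_h g_i}{T}(i-k)(1-l_k).
\]
Substituting $k=\rho i$ yields $-\frac{\Gamma_h g_i}{T}\,i(1-\rho)(1-l_k)$. This matches $\widehat{M}_{i,k}-\widehat{M}_{i,i}$ computed directly from the definition, which serves as a consistency check.

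\textbf{Step 3: the equivalence.} Write
\[
a_{i,k}-a_{i,i}=(s_{i,k}-s_{i,i})-\frac{\Gamma_h g_i}{T}\,i(1-\rho)(1-l_k).
\]
The condition $a_{i,k}>a_{i,i}$ holds exactly when $s_{i,k}-s_{i,i}$ exceeds the stated penalty. Both directions follow because the rearrangement is an equality, with no bounding involved.

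There is no real obstacle; this is a bookkeeping statement. The only points needing care are the following.
\begin{enumerate*}[label=(\roman*)]
\item The semantic score $s_{i,i}$ must be kept explicitly rather than bounded by $S_{\max}$, which is what makes the result a genuine iff.
\item The self-position term must be handled correctly, since $M_{i,i}$ is independent of $l_i$.
\item The result should be remarked on: it specializes to Theorem~\ref{thm:bilateral_invariant} when $l_k=1$, where the penalty vanishes, and to the unprotected case of Theorem~\ref{thm:dynamic_and_asymptotic} when $l_k=0$.
\end{enumerate*}
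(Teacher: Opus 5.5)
Your proposal is correct and follows the paper's route: rearrange $a_{i,k}>a_{i,i}$ into $s_{i,k}-s_{i,i} > M_{i,i}-M_{i,k}$, compute $M_{i,i}-M_{i,k} = \frac{\Gamma_h g_i}{T}(i-k)(1-l_k)$ from Eq.~\eqref{eq:gape}, and substitute $k=\rho i$. Your consistency check against $\widehat{M}_{i,j}$ and your remarks on the special cases $l_k\in\{0,1\}$ are accurate, though the argument does not need them.
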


\begin{figure}[htbp]
    \centering
    \includegraphics[width=\linewidth]{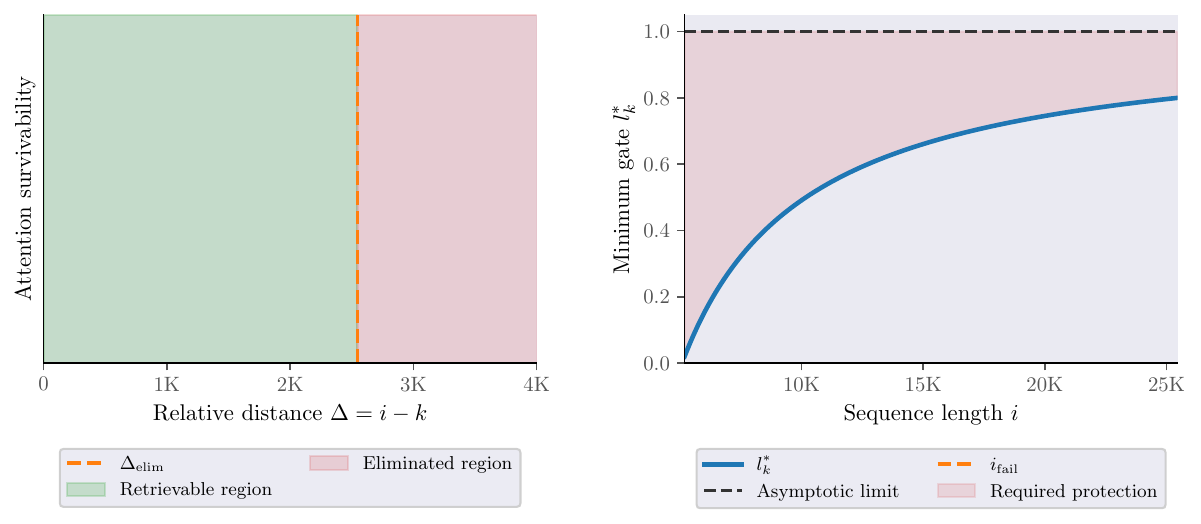}
    \caption{
    \emph{Visualisation of the \textsc{GAPE} NIAH retrieval threshold.}
    \textbf{(a)} For an unprotected token at relative distance $\Delta = i - k$, the retrievable region (where semantic similarity can still overcome the penalty) shrinks as $\Delta$ grows. Beyond the elimination horizon $\Delta_{\mathrm{elim}}$, the unprotected token is permanently suppressed regardless of its semantic score, and retrieval requires the landmark gate to be activated.
    \textbf{(b)} Once the sequence length exceeds the failure horizon $i_{\mathrm{fail}}$, an unprotected needle at depth $\rho$ cannot be recovered through semantic similarity alone. The minimum landmark activation $l_k^*$ required for successful retrieval rises monotonically toward $1$ as $i$ grows, showing that reliable long-range retrieval strictly requires the needle to be lifted into the protected set.
    }
    \label{fig:prop3_desmos}
\end{figure}

\begin{proof}
Using the retrieval condition
\[
a_{i,k} > a_{i,i},
\qquad
a_{i,m}=s_{i,m}+M_{i,m}
\]
we obtain:
\[
s_{i,k}-s_{i,i}
>
M_{i,i}-M_{i,k}
\]

Substituting
\[
M_{i,i}
=
\frac{\Gamma_h g_i}{T}i,
\qquad
M_{i,k}
=
\frac{\Gamma_h g_i}{T}
\bigl[
k(1-l_k)+i l_k
\bigr]
\]
gives:
\begin{align*}
M_{i,i}-M_{i,k}
&=
\frac{\Gamma_h g_i}{T}
\bigl[
i-k(1-l_k)-i l_k
\bigr] \\
&=
\frac{\Gamma_h g_i}{T}
(i-k)(1-l_k)
\end{align*}

Finally, substituting $k=\rho i$ yields:
\[
M_{i,i}-M_{i,k}
=
\frac{\Gamma_h g_i}{T}
i(1-\rho)(1-l_k)
\]
which establishes the stated condition.
\end{proof}

\noindent\textbf{Interpretation.}
\autoref{fig:prop3_desmos} illustrates the threshold in \autoref{prop:niah_threshold}. For an unprotected needle, $l_k=0$, the structural term grows with both the absolute sequence length $i$ and the relative depth $(1-\rho)$. Therefore, a middle-context needle faces an increasingly large retrieval barrier as the context grows. Since the semantic score differential is bounded by activation norms, there exists a finite horizon beyond which the needle cannot be recovered through semantic similarity alone. On the other hand, the right panel shows the complementary mechanism. As the sequence length increases, the minimum landmark gate $l_k^*$ required for successful retrieval approaches one. This makes explicit the role of GAPE's landmark mechanism: long-range retrieval is not treated as a purely semantic competition against the haystack, but as a gated survival problem. A distant token must either have enough semantic advantage to overcome the structural penalty, or activate its landmark gate so that it remains visible despite distance.

\vspace{1em}
\begin{corollary}
\label{cor:eviction_horizons}
Assume that the phase-modulated semantic similarity is bounded, $|s_{i,m}| \le S_{\max}$, capping the maximum semantic differential at $2S_{\max}$. Let the query gate satisfy $g_i \ge \epsilon > 0$. There exists a relative distance $\Delta_{\mathrm{elim}}$ such that any token $k$ at distance $(i-k) > \Delta_{\mathrm{elim}}$ is permanently eliminated from the attention distribution unless structurally protected. This distance is strictly bounded by:
\begin{equation}
    \Delta_{\mathrm{elim}}(l_k) = \frac{2S_{\max}T}{\Gamma_h \epsilon (1-l_k)}
\end{equation}
Consequently, in a NIAH setting at depth $\rho \in (0, 1)$, an unprotected needle ($l_k=0$) cannot be retrieved once the total sequence length $i$ exceeds the retrieval limit:
\begin{equation}
    i_{\mathrm{fail}} = \frac{2S_{\max}T}{\Gamma_h g_i (1-\rho)}
\end{equation}
Beyond this distance, successful retrieval mathematically requires the needle to activate its landmark gate above a critical threshold:
\begin{equation}
    l_k > 1 - \frac{2S_{\max}T}{\Gamma_h g_i \, i(1-\rho)} \equiv l_k^*
\end{equation}
\end{corollary}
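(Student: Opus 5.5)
The plan is to derive all three claims from the exact structural gap already computed in the proof of \autoref{prop:niah_threshold}, namely
\[
a_{i,i}-a_{i,k} \;=\; (s_{i,i}-s_{i,k}) \;+\; \frac{\Gamma_h g_i}{T}\,(i-k)(1-l_k).
\]
Since $|s_{i,m}|\le S_{\max}$ for all $m$, the semantic term satisfies $s_{i,i}-s_{i,k}\ge -2S_{\max}$. So the current (protected) token strictly dominates $k$ as soon as
\[
\frac{\Gamma_h g_i}{T}(i-k)(1-l_k) > 2S_{\max}.
\]
Here I read ``eliminated'' in the same sense as \autoref{thm:dynamic_and_asymptotic}: the token can never win the logit comparison against the current token, whatever its semantic score. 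Its residual softmax mass is then further controlled by \autoref{thm:asymptotic_eradication}.

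\textbf{Elimination horizon.} For $l_k<1$, I will solve the inequality for the distance, giving $i-k > 2S_{\max}T/(\Gamma_h g_i(1-l_k))$. I then use $g_i\ge\epsilon$ to replace this query-dependent bound by the uniform one $\Delta_{\mathrm{elim}}(l_k)=2S_{\max}T/(\Gamma_h\epsilon(1-l_k))$. This works because
\[
\frac{2S_{\max}T}{\Gamma_h g_i(1-l_k)} \le \Delta_{\mathrm{elim}}(l_k),
\]
so any token beyond $\Delta_{\mathrm{elim}}$ is beyond the query-specific threshold too. ``Unless structurally protected'' is handled by noting that $\Delta_{\mathrm{elim}}(l_k)\to\infty$ as $l_k\to1$, consistent with \autoref{thm:bilateral_invariant}.

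\textbf{Failure length and landmark threshold.} I substitute $k=\rho i$ into \autoref{prop:niah_threshold}: retrieval requires
\[
s_{i,k}-s_{i,i} > \frac{\Gamma_h g_i}{T}\, i(1-\rho)(1-l_k),
\]
and the left side is at most $2S_{\max}$. With $l_k=0$, retrieval is therefore impossible once $i(1-\rho)\Gamma_h g_i/T\ge 2S_{\max}$, i.e.\ for $i\ge i_{\mathrm{fail}}$. For general $l_k$, rearranging the same necessary condition gives
\[
1-l_k < \frac{2S_{\max}T}{\Gamma_h g_i\, i(1-\rho)},
\]
which is exactly $l_k>l_k^*$. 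Beyond $i_{\mathrm{fail}}$ we have $l_k^*\in(0,1)$, so the requirement is nontrivial.

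\textbf{Main obstacle.} The difficulty is not the algebra but interpretation. The necessary condition yields ``$\ge$'' at the boundary; I will state strict versus non-strict at $i=i_{\mathrm{fail}}$ carefully, and treat $g_i$ as fixed in the second part (unlike the $\epsilon$-uniform first part). I would also note that the failure statement is about the comparison $a_{i,k}>a_{i,i}$ defined in \autoref{prop:niah_threshold}, not about vanishing softmax mass. For the latter, I would add a one-line reference to \autoref{thm:asymptotic_eradication}.
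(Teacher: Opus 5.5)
Your proposal is correct and follows essentially the same route as the paper's proof. Both start from the retrieval gap of Proposition~\ref{prop:niah_threshold}, cap the semantic differential at $2S_{\max}$, and solve the resulting inequality in turn for the distance (using $g_i \ge \epsilon$), for $i$ (with $k=\rho i$, $l_k=0$), and for $l_k$. Your explicit check that the $\epsilon$-horizon dominates the query-specific horizon is a more careful justification of the step the paper calls ``substituting the minimum query gate $g_i=\epsilon$''.
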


\begin{proof}
By Proposition \ref{prop:niah_threshold}, retrieval requires $s_{i,k} - s_{i,i} > \frac{\Gamma_h g_i}{T}(i-k)(1-l_k)$. Because the maximum semantic differential is bounded by $2S_{\max}$, retrieval is only physically possible if the structural penalty remains strictly less than this ceiling: 
$$\frac{\Gamma_h g_i}{T}(i-k)(1-l_k) < 2S_{\max}$$
Substituting the minimum query gate $g_i = \epsilon$ and solving for the absolute distance $(i-k)$ yields the maximum retrieval distance $\Delta_{\mathrm{elim}}(l_k)$.

For the NIAH evaluation, we parameterise the distance as $(i-k) = i(1-\rho)$. For a completely unprotected needle ($l_k = 0$), the retrieval condition simplifies to $\frac{\Gamma_h g_i}{T}i(1-\rho) < 2S_{\max}$. Solving for the sequence length $i$ formally defines the hard failure horizon $i_{\mathrm{fail}}$.

For sequences where $i \ge i_{\mathrm{fail}}$, the inequality fundamentally fails for $l_k=0$. Rearranging the overarching retrieval inequality to isolate the required landmark activation $l_k$ yields the strict lower bound $l_k > l_k^*$.
\end{proof}

\newpage
\section{Extended Experiments}
\label{app:training_setup}

\subsection{Training Setup}
\label{app:training_setup}
The model used for the experiments presented in  \autoref{sec:real_world_setup} is a $\sim$44.5M parameter nanoGPT model trained using the configuration in \autoref{tab:train_conf_app}; with this setup (effective batch size $64 \times 8$ and sequence length 1024 over 3000 iterations), the model is exposed to approximately $1.5 \times 10^9$ tokens. The models were trained on a machine with a single NVIDIA A100 80GB GPU.

\begin{table}[h]
\centering
\small
\caption{Training configuration for the nanoGPT model. The parameters with (*) indicate the initialization values for the GAPE configuration.}
\vspace{0.5em}
\begin{tabular}{l c}
\toprule
\textbf{Parameter} & \textbf{Value} \\
\midrule
Optimizer & AdamW \\
Layers ($n_{\text{layer}}$) & 6 \\
Heads ($n_{\text{head}}$) & 8 \\
Embedding dim ($n_{\text{embd}}$) & 512 \\
RoPE fraction & 0.75 \\
RoPE $\theta$ & 10000 \\
Batch size & 64 \\
Block size & 1024 \\
Grad. accumulation & 8 \\
Max iterations & 3000 \\
Warmup iterations & 300 \\
Learning rate & $1 \times 10^{-3}$ \\
Min learning rate & $1 \times 10^{-4}$ \\
$\beta_1, \beta_2$ & 0.9, 0.95 \\
Grad clip & 1.0 \\
Weight decay & 0.1 \\
lambda\_l1* & 0.0 \\
init\_bg* & -3.0 \\
init\_bl* & 0.0 \\
init\_gamma* & 0.5413\\
\bottomrule
\end{tabular}
\label{tab:train_conf_app}
\end{table}

\subsection{Datasets}
\label{app:dataset}

In the following, we describe the specific dataset subsets and experimental configurations utilized for the evaluations presented in Section~\ref{sec:experiments}.

\begin{itemize}
    \item \textbf{Dolma 3 Longmino Mix (50B):} We utilized three subsets of this corpus, characterized by average document token lengths of $2^{13}$, $2^{14}$, and $2^{15}$, respectively. The first subset was used for training to better align the natural document size with the training context window. The remaining two subsets were reserved exclusively for evaluation. This stratification ensures that during OOD evaluation, the models process genuinely contiguous long texts rather than artificially concatenated shorter documents.
    \item \textbf{FineWeb:} This corpus was employed for the from-scratch pre-training of the 44M-parameter nanoGPT model evaluated on the RULER benchmark. We selected FineWeb due to its established high quality and widespread use in foundational LLM pre-training, which helps obtain meaningful performance on the RULER benchmark.
    \item \textbf{RULER dataset:} the RULER long context benchmark dataset is presented in \citet{hsieh2024ruler}. In this setup, we resulted on the version available on HuggingFace: \href{https://huggingface.co/datasets/rbiswasfc/ruler}{huggingface.co/datasets/rbiswasfc/ruler}.
\end{itemize}

\subsection{Ablation Study of the GAPE Mask Components}
\label{app:ablation}
We investigate the contribution of the individual components of the GAPE mask through an ablation study. Specifically, we train two additional models using the same setup adopted for the RULER experiments described in Section~\ref{sec:experiments}. In each variant, we ablate either the $l$ or the $g$ component of GAPE by making it non-learnable and fixing it to the corresponding values reported in Table~\ref{tab:train_conf_app}. Table~\ref{tab:ablation_comparison} reports the relative percentage difference in NLL with respect to the baseline across the different subsets of the RULER benchmark. Negative values indicate improved performance for the ablated model with respect to the baseline (lower NLL), whereas positive values indicate degraded performance (higher NLL).

\begin{table*}[h]
\centering

\caption{Relative percentage difference in NLL (\%) between the ablated models and the baseline across RULER subsets.
Negative values indicate improved performance of the ablated model with respect to the baseline (lower NLL), whereas positive values indicate degraded performance (higher NLL).}
\label{tab:ablation_comparison}
\small
\setlength{\tabcolsep}{6pt}
\begin{tabular}{lcccccccc}
\toprule
\textbf{Model} &
\textbf{cwe\_4k} &
\textbf{cwe\_8k} &
\textbf{niah\_4k} &
\textbf{niah\_8k} &
\textbf{qa\_4k} &
\textbf{qa\_8k} &
\textbf{vt\_4k} &
\textbf{vt\_8k} \\
\midrule

\texttt{g removed}
& \cellcolor{red!15} -3.10\%
& \cellcolor{red!15} -2.07\%
& \cellcolor{green!15} +0.63\%
& \cellcolor{green!15} +1.88\%
& \cellcolor{green!15} +2.94\%
& \cellcolor{green!15} +2.03\%
& \cellcolor{green!15} +4.54\%
& \cellcolor{green!15} +4.70\% \\

\texttt{l removed}
& \cellcolor{green!15} +0.62\%
& \cellcolor{red!15} -3.10\%
& \cellcolor{green!15} +3.34\%
& \cellcolor{green!15} +0.61\%
& \cellcolor{green!15} +4.73\%
& \cellcolor{green!15} +5.49\%
& \cellcolor{red!15} -0.02\%
& \cellcolor{green!15} +1.28\% \\

\bottomrule
\end{tabular}
\end{table*}

\subsection{Needle-In-A-Haystack Synthetic Experiment}
\label{sec:niah-settings}
We evaluate long-context retrieval in a controlled NIAH task. The vocabulary contains $27$ tokens: $10$ digit tokens, the special tokens \texttt{KEY}, \texttt{=}, and \texttt{?}, and $14$ filler tokens. Each input sequence has length $L$ and contains $n$ needles of the form $\texttt{KEY} = d_k$, where $d_k \in \{0,\ldots,9\}$ is the digit associated with the $k$-th
needle. An input sequence is a string of the form \textit{``[filler...]KEY=$d_1$[filler...]...KEY=$d_n$?''} where needles are placed uniformly at random in the sequence. The remaining positions are filled with random filler tokens, and the last token is always the query token \texttt{?}. The target label is defined only at the final position and corresponds to the digit of a selected needle. Needles are placed using a chunk-based procedure. The usable prefix $\{0,\ldots,L-2\}$ is divided into $N$ approximately equal chunks, and one needle is sampled uniformly inside each chunk. This prevents overlap while ensuring that needles appear at different depths throughout the context. Unless otherwise specified, the number of needles scales with sequence length as: $n = \lfloor L/64 \rfloor$. Thus, the training setting uses $L=2048$ and $n=32$ needles. At test time, we
evaluate length extrapolation on: $L \in \{2048,4096,8192\}$, corresponding to $1\times$, $2\times$, and $4\times$ the training context length, with the number of needles recomputed using the same rule. 

\paragraph{Retrieval Regimes and Model Settings.} We consider two retrieval regimes. In \texttt{needle-first}, the model must
return the digit of the first needle, while in \texttt{needle-last}, it must return the digit of the last needle. These regimes isolate different positional demands:
\texttt{needle-first} requires preserving distant early information, \texttt{needle-last} favors recent information near the query, and \texttt{needle-middle} requires selecting a target away from both extremes.  All models are trained as decoder-only Transformers with $2$ layer GPT-2 style transformer built from Karpathy's \href{https://github.com/karpathy/nanoGPT}{nanoGPT},  $2$ attention heads, embedding dimension $64$, and dropout disabled. We train for up to $5000$ steps with batch size $64$, AdamW optimization, learning rate $3 \times 10^{-4}$, minimum cosine-decayed learning rate $3 \times 10^{-5}$, and weight decay $0.1$. Validation is performed in every $100$ step on $1000$ examples, and training is stopped early after repeated perfect validation accuracy.
During evaluation, accuracy is computed from the prediction at the final query position over the $10$ digit tokens.

\paragraph{Attention Entropy in the Needle-in-a-Haystack Setting.}
\label{app:niah_entropy}

We further analyze GAPE through attention entropy in the same controlled NIAH setting. For a query position $i$ and causal attention distribution $\alpha_i$ over keys $j \leq i$, we define the Shannon entropy as:
\begin{equation}
    H(\alpha_i)
    =
    -\sum_{j \leq i} \alpha_{i,j}\log \alpha_{i,j}
\end{equation}
Figure~\ref{fig:GAPE_gate_analysis} (left) reports the average entropy $H$ over both heads for each target regime, context length, and Transformer layer. For each \texttt{+GAPE} variant, we report the relative change
\begin{equation}
     \Delta H = H_{\text{PE+GAPE}} - H_{\text{PE}}
\end{equation}
with respect to the corresponding base positional encoding. Negative values therefore indicate sharper attention, while positive values indicate more diffuse attention. The results show that GAPE is not a uniform entropy minimizer. Its effect depends on the retrieval regime, the base positional encoding, and the layer.

In the \textsc{Needle-Close} setting, GAPE consistently reduces Layer~1 entropy for NoPE and $p$-RoPE, suggesting that the routing mask contracts irrelevant background early in the network. For instance, NoPE+GAPE lowers Layer~1 entropy by about $0.21$ across all context lengths, while $p$-RoPE+GAPE yields increasing reductions from $-0.092$ at $2048$ tokens to $-0.143$ at $8192$ tokens. Layer~2 is more mixed: NoPE+GAPE increases entropy, suggesting a later aggregation or redistribution step, whereas GAPE+RoPE and $p$-RoPE+GAPE mostly reduce it. In the \textsc{Needle-Far} setting, NoPE+GAPE reduces entropy in both layers, with the strongest effect in Layer~2 as the context grows ($-0.184 \rightarrow -0.291$ from $2048$ to $8192$ tokens). This is consistent with GAPE suppressing unprotected background while preserving distant relevant tokens through the landmark gate. For RoPE and $p$-RoPE, some entropy changes in Layer~2 are positive, indicating that far retrieval may require maintaining a wider set of candidate positions rather than simply sharpening attention.

ALiBi provides a useful contrast: its entropy decreases with context length due to its fixed recency bias, but this sharpening is not content-adaptive and can suppress distant relevant evidence. GAPE instead modulates entropy through learned routing: $l_j$ preserves landmarks, while $g_i$ controls contraction of unprotected context. These results support the core claim that GAPE separates \emph{what to preserve} from \emph{how much to suppress}, rather than imposing a fixed distance-based decay.

\paragraph{Query-gate Adaptation Across Retrieval Regimes.}
\label{app:query-gate_behaviour}
Figure~\ref{fig:niah_gi_close_far} provides a direct view of how the learned query
gate adapts to the retrieval regime in the synthetic NIAH task. When the target
needle is close to the query, both attention heads assign a large value to $g_i$ at
the final query position, which corresponds to a stronger contraction of unprotected
tokens and therefore a sharper recency bias. In contrast, when the target needle is
far from the query, the learned gate remains close to zero for both heads, effectively
relaxing the contraction and allowing distant tokens to remain accessible. This
behavior is indeed consistent with the intended role of GAPE: $g_i$ does not impose a fixed distance penalty, but instead adapts the effective context length to the demands of
the query. In this sense, the model learns to strengthen recency when the answer is
local and to suppress it when successful retrieval requires preserving long-range
evidence.
\begin{figure}[t]
\centering
\includegraphics[width=0.5\linewidth]{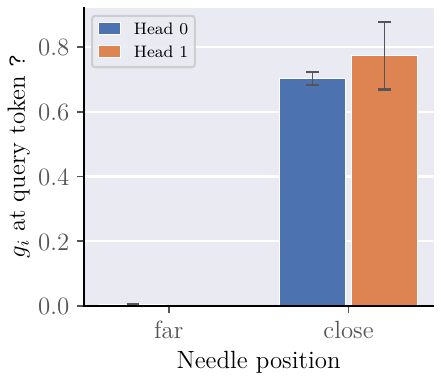}
\caption{
\emph{Query-gate behavior in the NIAH setting.}
Average query-gate value $g_i$ at the final query token for the two attention heads,
reported separately for the \textsc{Needle-Far} and \textsc{Needle-Close} regimes.
When the target needle is close to the query, both heads learn substantially larger
$g_i$ values, corresponding to stronger contraction of unprotected context. When the
target is far, $g_i$ remains near zero, indicating that the model relaxes the recency
bias in order to preserve access to distant relevant information.
}
\label{fig:niah_gi_close_far}
\end{figure}

\paragraph{Comparison with FoX in the NIAH setting.}
\label{app:niah_fox_comparison}
To further contextualize GAPE, we compare it against FoX and YaRN in the same controlled NIAH retrieval task. Figure~\ref{fig:fox_comparison_niah} reports
retrieval accuracy as a function of context length for the \textsc{Needle-Close}
and \textsc{Needle-Far} regimes. We include FoX, ALiBi, NoPE, and NoPE+GAPE.
\begin{figure}[t]
\centering
\includegraphics[width=0.95\linewidth]{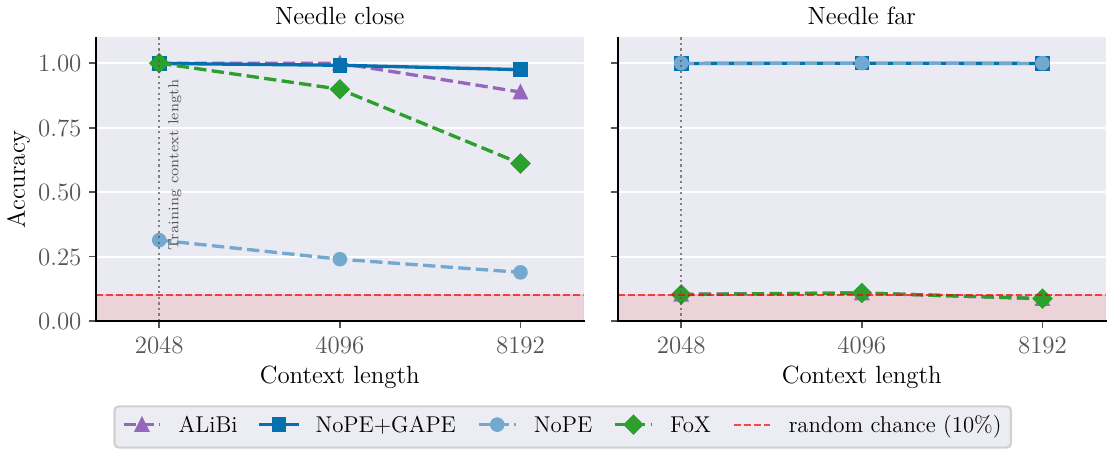}
\caption{
\emph{Comparison with FoX \cite{lin2025forgettingtransformersoftmaxattention} in the synthetic NIAH task.}
We report retrieval accuracy under context extrapolation for
\textsc{Needle-Close} (left) and \textsc{Needle-Far} (right). NoPE+GAPE remains
near-perfect across context lengths, while FoX degrades for close retrieval and
collapses near chance when the needle is far. ALiBi shows the expected fixed-recency
pattern, succeeding only when the target is close. These results show that GAPE
preserves distant evidence while still suppressing irrelevant context.
}
\label{fig:fox_comparison_niah}
\end{figure}
The results report that in the \textsc{Needle-Close} regime, both
FoX and NoPE+GAPE perform strongly at the training context length, but FoX
degrades more as the sequence length increases. In the \textsc{Needle-Far} regime, the difference is much bigger: while NoPE+GAPE 
preserves near-perfect retrieval across all context lengths, FoX remains close to
random chance. This suggests that forgetting alone is insufficient when successful
retrieval requires preserving distant evidence. By separating \emph{how strongly}
to contract the context from \emph{which} tokens should remain protected, GAPE
can recover the benefits of adaptive recency bias without sacrificing access to
far relevant tokens.

\begin{figure}[t]
\centering
\includegraphics[width=0.95\linewidth]{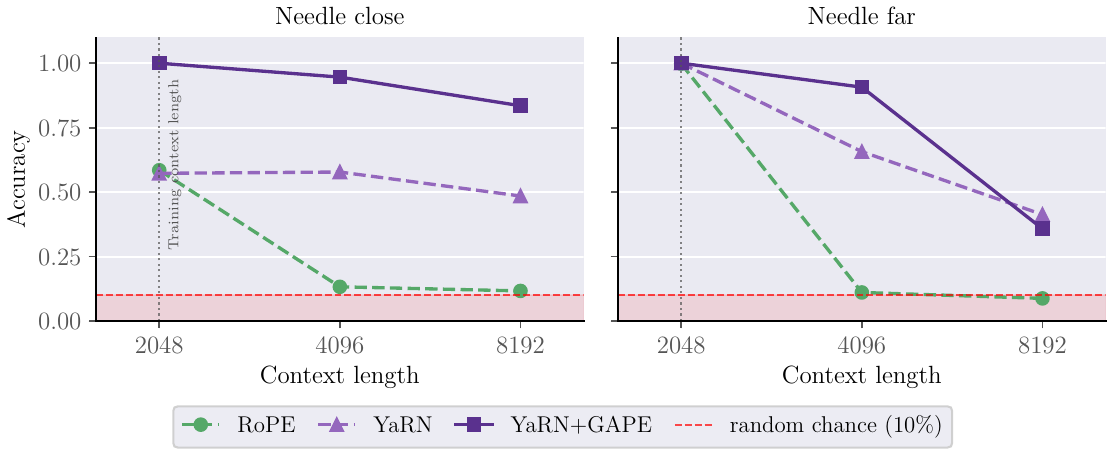}
\caption{
\emph{Comparison with YaRN \cite{lin2025forgettingtransformersoftmaxattention} in the synthetic NIAH task.}
We report retrieval accuracy under context extrapolation for
\textsc{Needle-Close} (left) and \textsc{Needle-Far} (right). While YaRN reports competitive performance across context lengths, we observe that combining it with GAPE yields further improvements. This suggests that GAPE is highly compatible with rotary interpolation methods.
}
\label{fig:yarn_comparison_niah}
\end{figure}

Figure~\ref{fig:yarn_comparison_niah} compares the performance of YaRN and YaRN+GAPE under identical experimental setups. While YaRN already improves length out-of-distribution (OOD) generalization over standard RoPE, integrating GAPE yields further performance gains. This demonstrates that GAPE is highly complementary to rotary interpolation methods, effectively combining the strengths of both approaches.

\subsection{Frequency-Channel Norm Distribution}
\label{app:qk_norm_analysis}

To better understand how GAPE changes the internal use of rotary dimensions, we analyze the distribution of query/key norms across frequency channels. For each positional encoding, we group the query/key dimensions according to their associated rotary frequency channel and compute the average norm over a fixed reference batch. We compare GAPE against RoPE and $p$-RoPE in \autoref{fig:qk_norms_by_channel_app}.
\begin{figure}[t]
    \centering
    \includegraphics[width=0.95\linewidth]{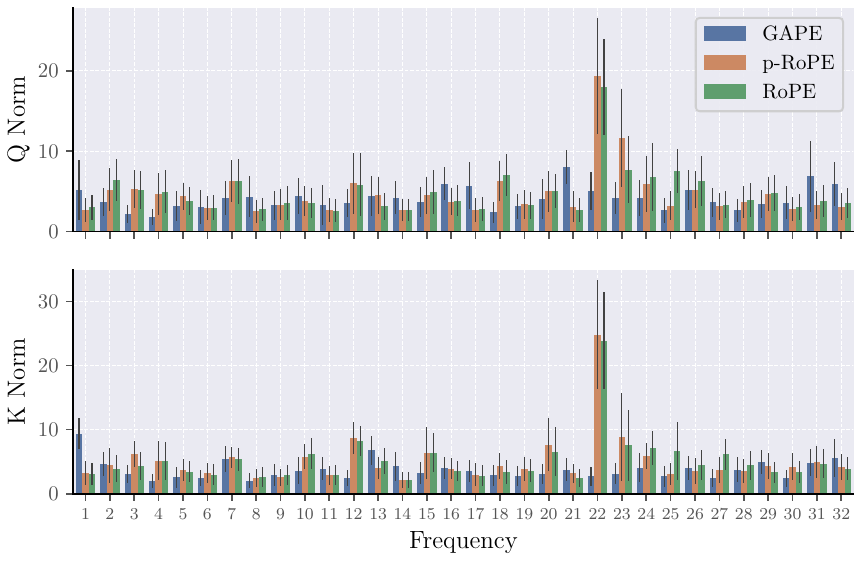}
    \caption{
    \emph{Query/key norm distribution by frequency.}
    We report the average norm value across rotary frequency channels for GAPE, $p$-RoPE, and RoPE. RoPE and $p$-RoPE develop a pronounced spike around a narrow frequency band, while GAPE produces a more distributed norm profile. Error bars indicate variation across the measured samples.
    }
    \label{fig:qk_norms_by_channel_app}
\end{figure}
The results show that RoPE and $p$-RoPE allocate a large norm to a small subset of frequency channels. This behavior is consistent with the interpretation that purely multiplicative positional encodings must use the rotary feature space itself to manage both local positional matching and long-range interference. In particular, a high norm concentration in specific channels can be interpreted as a compensatory mechanism: the model amplifies selected rotary dimensions to sharpen positional discrimination and resist spurious distant alignments. GAPE changes this behavior. Because GAPE introduces an explicit structural bias $M_{i,j}$, long-range suppression no longer has to be implemented solely through phase-modulated semantic channels. The query gate $g_i$ and the landmark gate $l_j$ provide a separate mechanism for horizon control, allowing the rotary channels to remain less concentrated. The flatter norm profile observed for GAPE therefore supports the intended decomposition of roles: rotary dimensions preserve local phase-based matching, while the structural routing pathway handles long-range suppression and landmark preservation. Taken together, these results present a consistent picture: GAPE produces more selective attention distributions, this selectivity arises through learned head-specific masks rather than uncontrolled norm growth, and the rotary frequency channels remain broadly distributed rather than concentrating energy in a narrow band. This supports the intended decomposition in which local positional geometry and long-range structural filtering are handled by separate pathways.

\subsection{Full Routing Dynamics of GAPE and Head Specialisation}
\label{app:GAPE_routing_dynamics}
This section is an in-depth explanation of \autoref{sec:GAPE_mechanistic_analysis}. We complement the downstream evaluation with a mechanistic analysis of the learned GAPE routing variables. The purpose of this analysis is to verify whether the additional structural pathway is actively used by the model, and whether its learned dynamics match the intended decomposition between suppression and preservation. 
\autoref{fig:GAPE_routing_dynamics} reports the evolution of the induced structural mask ${M}_h$, landmark activations $\bar{l}_h$, query gates $\bar{g}_h$, and head amplitudes $\Gamma_h$, averaged over a fixed reference batch and shown separately for each layer and attention head.
\begin{figure}[htb]
    \centering

    \begin{subfigure}[t]{\linewidth}
        \centering
        \includegraphics[width=\linewidth]{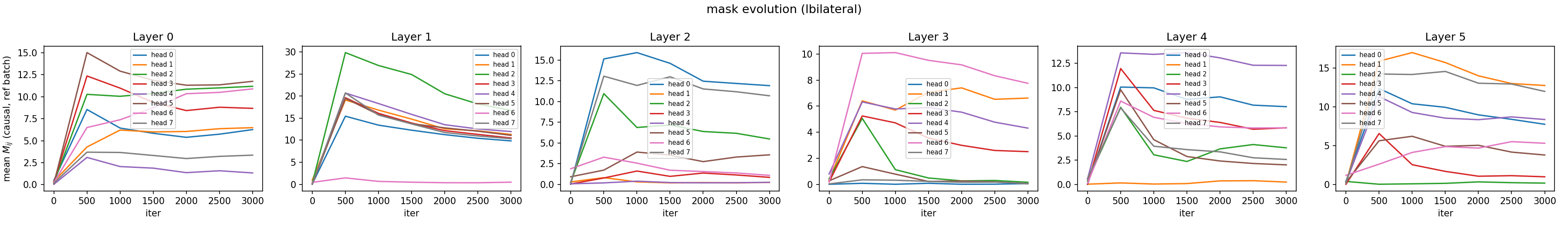}
        \caption{Mask evolution.}
        \label{fig:GAPE_mask_evolution}
    \end{subfigure}

    \vspace{0.3em}

    \begin{subfigure}[t]{\linewidth}
        \centering
        \includegraphics[width=\linewidth]{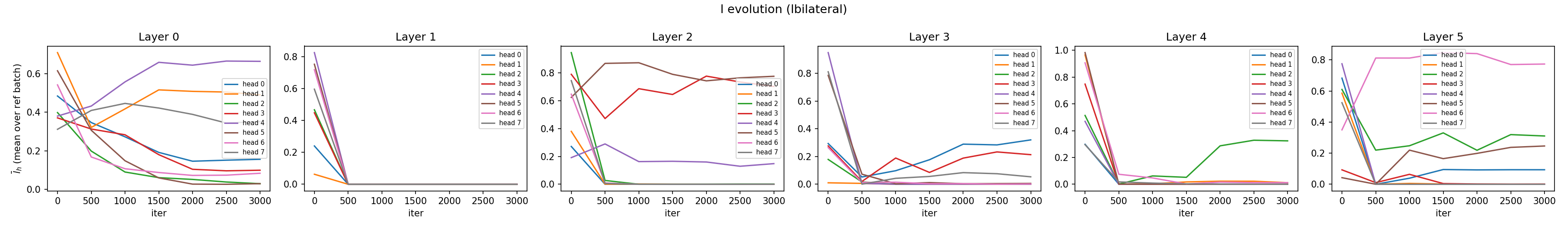}
        \caption{Landmark activation evolution.}
        \label{fig:GAPE_l_evolution}
    \end{subfigure}

    \vspace{0.3em}

    \begin{subfigure}[t]{\linewidth}
        \centering
        \includegraphics[width=\linewidth]{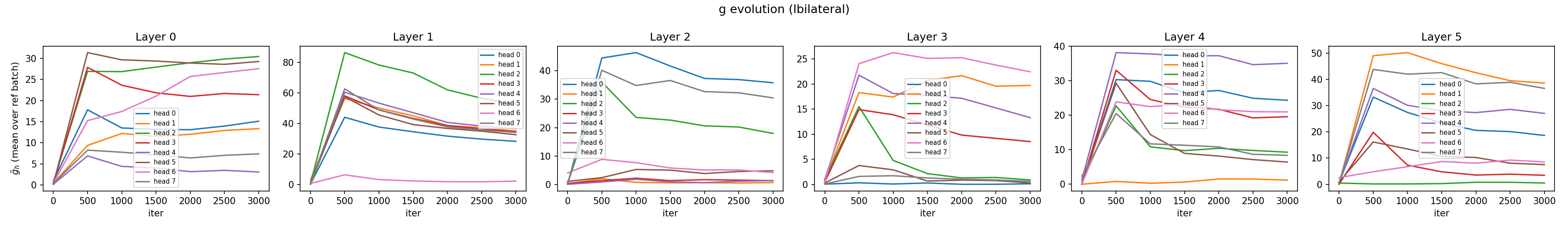}
        \caption{Query-dependent forgetting gate evolution.}
        \label{fig:GAPE_g_evolution}
    \end{subfigure}

    \vspace{0.3em}

    \begin{subfigure}[t]{\linewidth}
        \centering
        \includegraphics[width=\linewidth]{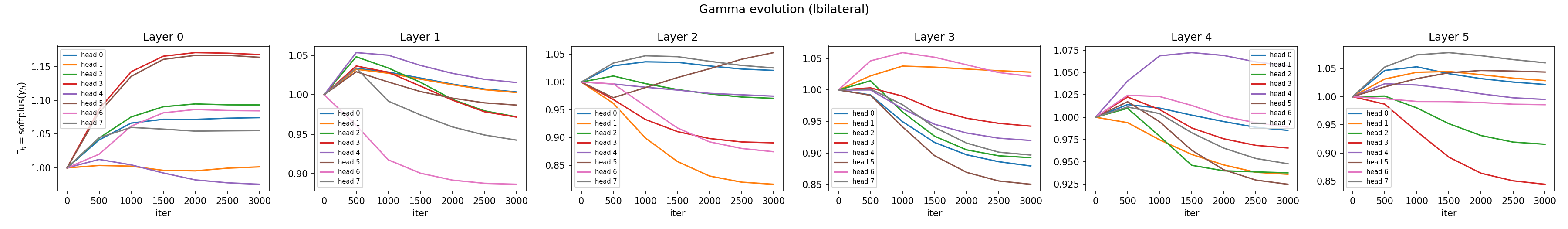}
        \caption{Head amplitude evolution.}
        \label{fig:GAPE_gamma_evolution}
    \end{subfigure}

    \caption{
    \textbf{Evolution of GAPE routing variables during training.}
    We track the mean structural mask $\bar{M}_h$, landmark activation $\bar{l}_h$, query-dependent forgetting gate $\bar{g}_h$, and head amplitude $\Gamma_h$ across layers and heads. GAPE learns a heterogeneous routing structure: several heads develop high masks values and act as contractive filters, while a smaller subset preserves non-trivial landmark activations, most prominently in Layers~2 and~5. The dynamics indicate that GAPE does not implement a uniform distance-decay prior, but instead separates background suppression from landmark-mediated preservation.
    }
    \label{fig:GAPE_routing_dynamics}
\end{figure}

\paragraph{Mask Evolution.}
The structural mask is the most direct observable of the GAPE mechanism, since it is injected into the pre-softmax attention logits. As shown in \autoref{fig:GAPE_mask_evolution}, the learned masks consistently move toward higher values across layers, indicating that the structural pathway is active and directly reshapes the attention distribution. The suppression pattern is highly non-uniform across heads and depth: early layers already separate into mildly biased and strongly contractive heads, with Layer~1 showing the most coherent shift. Intermediate layers exhibit a mixed regime, where some heads remain near-neutral while others become strongly suppressive. The final layer preserves this separation, suggesting that GAPE maintains both filtering and preservation channels close to the output. From the image, we can observe that the model learns head-dependent contraction patterns, allowing different heads to specialize in different degrees of structural suppression.

\paragraph{Landmark-score Evolution.}
Landmark activations reveal how GAPE avoids collapsing into a uniform decay mechanism. Since $l_j$ controls whether a key is protected from structural suppression, the trajectories of $\bar{l}_h$ identify which heads retain landmark-mediated routing capacity. As shown in \autoref{fig:GAPE_l_evolution}, landmark protection is sparse and strongly layer-dependent. Layer~0 gradually reduces landmark activations in most heads, while retaining at least one comparatively stable channel. Layer~1 shows an almost complete collapse of $\bar{l}_h$ across heads, matching the strongly mask activation in \autoref{fig:GAPE_mask_evolution} and suggesting that this layer mainly treats tokens as unprotected. By contrast, Layer~2 maintains or increases landmark activations in several heads, some approaching high values, while Layers~3 and~4 return to a mostly suppressive regime, even if the latter shows partial recovery in a small subset of heads. Layer~5 exhibits the clearest late-stage specialization, with some heads remaining near zero and others preserving high landmark activations throughout training. The model is indeed showing a selective-routing interpretation, learning an intermediate regime in which most heads contribute to suppression, while a smaller subset preserves protected-token channels.

\paragraph{Query-gate Evolution.}
The query gate $g_i$ controls the strength of contraction induced by the current query. It does not determine which keys are important, but rather, it sets how aggressively unprotected keys are attenuated. For this reason, $\bar{g}_h$ must be interpreted jointly with the landmark activations $\bar{l}_h$ and the resulting masks $\bar{M}_h$. The trajectories show that $g_i$ is actively used as a routing variable. Heads with large $\bar{g}_h$ and collapsed $\bar{l}_h$ enter a strongly contractive regime, assigning substantial recency bias to most keys and reducing the effective context available to the query. This accounts for the large masks value observed in suppressive heads. Conversely, heads with large $\bar{g}_h$ but non-negligible $\bar{l}_h$ implement the characteristic GAPE behavior: the structural filter becomes sharp, while landmark tokens remain protected from the full penalty. Thus, a large query gate alone corresponds to strong recency-biased contraction, whereas a large query gate combined with nonzero landmark activations yields selective contraction. The layer-wise dynamics suggest that Layers~1 and~3 primarily use $g_i$ for filtering, while Layers~2 and~5 provide the clearest evidence of landmark-conditioned preservation.

\paragraph{Head-amplitude Evolution.}
The amplitude $\Gamma_h$ sets the head-wise scale of the mask. $\Gamma_h$ evolves more smoothly and remains comparatively bounded, indicating that it mainly acts as a calibration parameter, modulating how strongly each head participates in the structural pathway. Since strongly masks value emerge without an uncontrolled growth of $\Gamma_h$, the observed suppression is better attributed to structured routing: heads with low landmark protection and active query gates become contractive, whereas heads with sustained landmark activations retain protected channels.
\begin{figure}[htb]
    \centering
    \includegraphics[width=\linewidth]{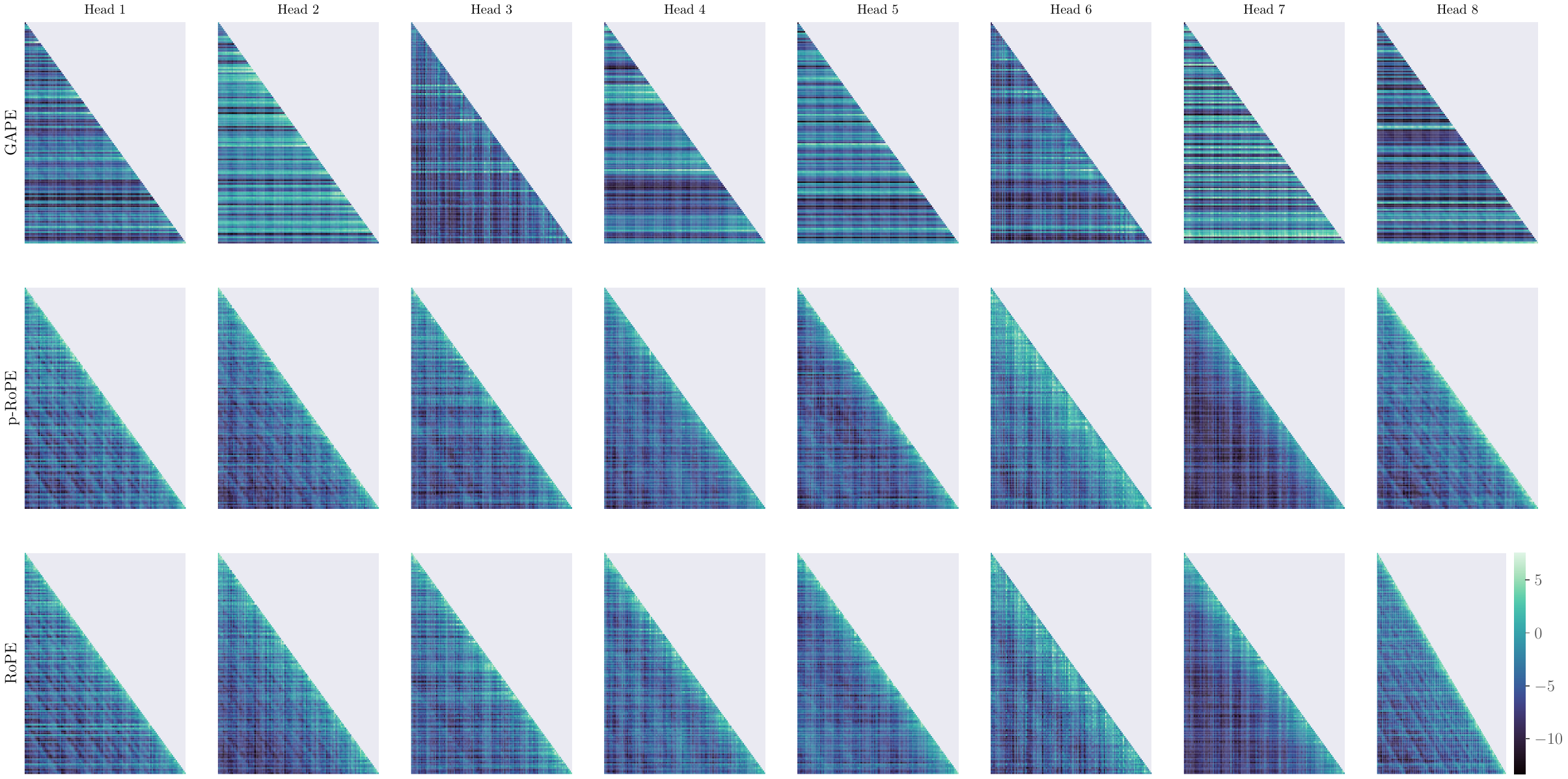}
    \caption{
    \emph{Layer~5 attention maps across heads.}
    We compare the realised attention maps of GAPE, $p$-RoPE, and RoPE across the eight heads of the final layer. RoPE and $p$-RoPE exhibit comparatively smoother causal patterns, whereas GAPE develops more heterogeneous head-wise structures, with some heads showing sharper contraction and others retaining broader or banded access. This provides a qualitative view of the learned head specialisation induced by the GAPE routing mechanism.
    }
    \label{fig:attention_maps_layer5_app}
\end{figure}

\paragraph{Head Specialization in Attention Maps.}
The routing-variable dynamics are directly reflected in the realized attention patterns. As shown in \autoref{fig:attention_maps_layer5_app}, GAPE produces more heterogeneous attention maps across heads than RoPE and $p$-RoPE. While the rotary baselines are largely governed by causal structure and phase-based similarity, GAPE introduces clear head-wise differentiation: some heads display sharp contraction patterns, consistent with strong mask activation and active query gates, whereas others retain broader or more structured access, consistent with non-trivial landmark protection. Importantly, this does not indicate a uniform shortening of the context window. Rather, it confirms that GAPE assigns different roles to different heads, separating background suppression from landmark-mediated preservation in a way that is visible both in the learned routing variables and in the final attention maps.

\subsection{Complexity Analysis}
\label{app:complexity}
We analyze the computational and inference-memory complexity of GAPE relative to
standard RoPE. Let $B$ denote the batch size, $T$ the sequence length, $L$ the number
of Transformer layers, $H$ the number of attention heads, $d$ the head dimension,
and $C = Hd$ the width of the model. We implement GAPE and RoPE through the same tensor shapes passed to scaled dot-product attention, namely $(B,H,T,d)$ for queries, keys, and values, such that the dominant attention computation is unchanged.

\paragraph{FLOPs.}
For a single Transformer layer, both RoPE and GAPE share the same costs across the QKV projection, causal attention, output projection, and MLP. Ignoring lower-order normalization terms, these costs are:
\[
\text{FLOPs}_{\mathrm{ROPE}}
=\text{FLOPs}_{\mathrm{GAPE}}=
2BHT^2d + 24BTC^2 + O(BTC)
\]
The term $2BHT^2d$ comes from the two products of the matrix in attention, $QK^\top$ and $\mathrm{Attn} \cdot V$, while the term $24BTC^2$ comes from the QKV projection, the output projection, and the two-layer MLP. The cost of the specific positional encoding is only linear in sequence length.
\[
\text{FLOPs}_{\mathrm{RoPE}}
=
\text{FLOPs}_{\mathrm{GAPE}}
=
O(BHT^2d + BTC^2)
\]

\paragraph{KV-cache Memory at Inference.}
During autoregressive decoding, each layer stores past keys and values in a KV
cache. For RoPE, the rotated key $k_t$ and value $v_t$ are appended to the cache.
For GAPE, the landmark value and positional information are already encoded in
the modified key tensor, so the modified key $k_t$ and the value $v_t$ are
cached. No additional gate values, landmark scores, or positional states need to
be stored. Thus, both RoPE and GAPE cache tensors of identical shape $K,V \in \mathbb{R}^{B \times H \times T_{\mathrm{ctx}} \times d}$. For each layer, where $T_{\mathrm{ctx}}$ is the current decoded context length. The total KV-cache size is therefore:
\[
M_{\mathrm{KV}}
=
2L \cdot B \cdot T_{\mathrm{ctx}} \cdot H \cdot d
=
2L \cdot B \cdot T_{\mathrm{ctx}} \cdot C
\quad \text{elements}
\]
In fp16 inference, this corresponds to:
\[
M_{\mathrm{KV}}^{\mathrm{fp16}}
=
4LBT_{\mathrm{ctx}}C
\quad \text{bytes}
\]
Therefore, GAPE does not introduce additional KV-cache memory overhead relative to
RoPE:
\[
M_{\mathrm{KV}}^{\mathrm{GAPE}}
=
M_{\mathrm{KV}}^{\mathrm{RoPE}}
\]
At each decoding step, the dominant per-layer cost is also unchanged. The model
computes projections for the new token, is compared with the cached keys, and
aggregates cached values:
\[
\text{Decode FLOPs per layer}
=
24C^2 + 4CT_{\mathrm{ctx}} + O(C)
\]
where the $O(C)$ term contains the positional-encoding computation. Since both
RoPE and GAPE have only linear positional overhead, the decode-time complexity is
asymptotically identical.

\end{document}